\newcommand{\inConference}[1]{\iftoggle{conference}{#1}{}} 
\newcommand{\inArXiv}[1]{\iftoggle{conference}{}{#1}}  
\newtheorem{theorem}{Theorem}
\newtheorem{claim}{Claim}
\newtheorem{observation}{Observation}
\newtheorem{definition}{Definition}
\newtheorem{lemma}{Lemma}
\newcommand{\newreptheorem}[2]{%
	\newenvironment{rep#1}[1]{%
	\expandafter\renewcommand\csname the#2\endcsname{\ref*{##1}}%
	\expandafter\renewcommand\csname theH#2\endcsname{repeat.##1}%
	\begin{#1}}%
	{\end{#1}%
	\addtocounter{#2}{-1}}}
\DeclarePairedDelimiter{\ceil}{\lceil}{\rceil} 
\DeclarePairedDelimiter{\evdel}{[}{]} 
\newcommand{\E}{\mathbb{E}\evdel}
\newcommand{\OPT}{\text{OPT}} 
\newcommand{\N}{\mathcal{N}}
\newcommand{\algdt}{\textsc{{RepeatedGreedy}}\xspace}
\newcommand{\algrd}{\textsc{{SampleGreedy}}\xspace}
\newcommand{\nnR}{{\bR_{\geq 0}}}
\def \ee   {\varepsilon}
\def \NN   {{\cal N}}
\def \II   {{\cal I}}
\def \MM   {{\cal M}}
\newcommand{\characteristic}{{\mathbf{1}}}
\newcommand{\defcal}[1]{\expandafter\newcommand\csname c#1\endcsname{{\mathcal{#1}}}}
\newcommand{\defbb}[1]{\expandafter\newcommand\csname b#1\endcsname{{\mathbb{#1}}}}
\newcounter{calBbCounter}
    \edef\letter{\Alph{calBbCounter}}
\title{Greed Is Good: Near-Optimal Submodular\\ Maximization via Greedy Optimization}
\author[1]{Moran Feldman}
\author[2]{Christopher Harshaw}
\author[3]{Amin Karbasi}
\affil[1]{Department of Mathematics and Computer Science, Open University of Israel}
\affil[2]{Department of Computer Science, Yale University}
\affil[3]{Department of Electrical Engineering, Yale University}
\begin{document}

\maketitle

\begin{abstract}
It is known that greedy methods perform well for maximizing \textit{monotone} submodular functions. At the same time, such methods perform poorly in the face of non-monotonicity.  In this paper, we show---arguably, surprisingly---that invoking the classical greedy algorithm $O(\sqrt{k})$-times leads to the (currently) fastest deterministic algorithm, called \algdt, for maximizing a general submodular function subject to $k$-independent system constraints. \algdt achieves $(1 + O(1/\sqrt{k}))k$ approximation using $O(nr\sqrt{k})$ function evaluations (here, $n$ and $r$ denote the size of the ground set and the maximum size of a feasible solution, respectively). We then show that by a careful sampling procedure, we can run the greedy algorithm only \textit{once} and obtain the (currently) fastest randomized algorithm, called \algrd, for maximizing a submodular function subject to $k$-extendible system constraints (a subclass of $k$-independent system constrains). \algrd achieves $(k + 3)$-approximation with only $O(nr/k)$ function evaluations. Finally, we derive an almost matching lower bound, and show that no polynomial time algorithm can have an approximation ratio smaller than $ k + \nicefrac{1}{2} - \varepsilon$. To further support our theoretical results, we compare the performance of \algdt and \algrd with prior art in a concrete application (movie recommendation). We consistently observe that while \algrd achieves practically the same utility as the best baseline, it performs at least two orders of magnitude faster. 

\medskip
\noindent \textbf{Keywords}: Submodular maximization, $k$-systems, $k$-extendible systems, approximation algorithms
\end{abstract}
 
\newpage 

\section{Introduction} \label{sec:intro}

Submodular functions \citep{edmonds1971matroids, fujishige91}, originated in combinatorial optimization
and operations research,  exhibit a natural diminishing returns property common in many
well known objectives: the marginal benefit of any given element decreases as
more and more elements are selected. As a result, submodular 
optimization has found numerous applications in machine learning, including  viral marketing \citep{kempe03}, network monitoring \citep{leskovec07, gomez10}, sensor placement and information gathering \citep{guestrin2005near}, news article recommendation \citep{el2009turning}, nonparametric learning \citep{RG13}, document and corpus summarization \citep{lin2011class,kirchhoff2014submodularity,SSSJ12}, data summarization \citep{mirzasoleiman2013distributed}, crowd teaching \citep{singla2014near}, and MAP inference of determinental point process \citep{GKT12}. The usefulness of submodular optimization in these settings stems from the fact that many such problems can be reduced to the problem of
maximizing a submodular function subject to feasibility constraints, such as cardinality, knapsack, matroid or intersection of matroids constraints. 
%
%

 In this paper, we consider the maximization of submodular functions subject to two important classes of constraints known as $k$-system and $k$-extendible system constraints. The class of $k$-system constraints is a very general class of constraints capturing, for example, any constraint which can be represented as the intersection of multiple matroid and matching constraints. The study of the maximization of \emph{monotone} submodular functions subject to a $k$-system constraint goes back to the work of~\cite{Fisher1978}, who showed that the natural greedy algorithm achieves an approximation ratio of $1/(k + 1)$ for this problem ($k$ is a parameter of the $k$-system constraint measuring, intuitively, its complexity). In contrast, results for the maximization of \emph{non-monotone} submodular functions subject to a $k$-system constraint were only obtained much more recently. Specifically, \citet{Gupta2010} showed that, by repeatedly executing the greedy algorithm and an algorithm for unconstrained submodular maximization, one can maximize a non-monotone submodular function subject to a $k$-system constraint up to an approximation ratio of roughly $3k$ using a time complexity of $O(nrk)$\footnote{Here, and throughout the paper, $n$ represents the size of the ground set of the submodular function and $r$ represents the maximum size of any set satisfying the constraint.}. This was recently improved by~\cite{Mirzasoleiman2016}, who showed that the approximation ratio obtained by the above approach is in fact roughly $2k$.

The algorithms we describe in this paper improve over the above mentioned results both in terms of the approximation ratio and in terms of the time complexity. Our first result is a \emph{deterministic} algorithm, called \algdt, which obtains an approximation ratio of $(1 + O(1/\sqrt{k}))k$ for the maximization of a non-monotone submodular function subject to a $k$-system constraint using a time complexity of only $O(nr\sqrt{k})$. \algdt is structurally very similar to the algorithm of~\cite{Gupta2010} and~\cite{Mirzasoleiman2016}. However, thanks to a tighter analysis, it needs to execute the greedy algorithm and the algorithm for unconstrained submodular maximization much less often, which yields its improvement in the time complexity.

Our second result is a \emph{randomized} algorithm, called \algrd, which manages to further push the approximation ratio and time complexity to $(k + 1)^2/k \leq k + 3$ and $O(n + nr/k)$, respectively. However, it does so at a cost. Specifically, \algrd applies only to a  subclass of $k$-system constraints known as $k$-extendible system constraints. We note, however, that the class of $k$-extendible constraints is still general enough to capture, among others, any constraint that can be represented as the intersection of multiple matroid and matching constraints. Interestingly, when the objective function is also monotone or linear the approximation ratio of {\algrd} improves to $k + 1$ and $k$, respectively,\footnote{The improvement for linear objectives requires a minor modification of the algorithm.} which matches the best known approximation ratios for the maximization of such functions subject to a $k$-extendible system constraint \citep{Fisher1978,Jenkyns1976,Mestre2006}. Previously, these ratio were obtained using the greedy algorithm whose time complexity is $O(nr)$. Hence, our algorithm also improves over the state of the art algorithm for maximization of monotone submodular and linear functions subject to a $k$-extendible system constraint in terms of the time complexity.

We complement our algorithmic results with two inapproximability results showing that the approximation ratios obtained by our second algorithm are almost tight. Previously, it was known that no polynomial time algorithm can have an approximation ratio of $k - \varepsilon$ (for any constant $\varepsilon > 0)$ for the problem of maximizing a linear function subject to a $k$-system constraint~\citep{Badanidiyuru14}. We show that this result extends also to $k$-extendible systems, i.e., that no polynomial time algorithm can have an approximation ratio of $k - \epsilon$ for the problem of maximizing a linear function subject to a $k$-extendible system constraint. Moreover, for monotone submodular functions we manage to get a slightly stronger inapproximability result. Namely, we show that no polynomial time algorithm can have an approximation ratio of $1 / (1 - e^{-k}) - \varepsilon \leq k + \nicefrac{1}{2} - \varepsilon$ for the problem of maximizing a monotone submodular function subject to a $k$-extendible system constraint. Note that the gap between the approximation ratio obtained by \algrd (namely,  $(k + 3)$) and the inapproximability result (namely, $(k + \nicefrac{1}{2} - \varepsilon)$) is very small. A short summary of all the results discussed above for non-monotone submodular objectives can be found in Table~\ref{tbl:results}. 

\begin{table}
\begin{center}
\caption{Summary of Results for Non-monotone Submodular Objectives} \label{tbl:results}
\def\arraystretch{1.2}
\begin{tabular}{|l|c|c|}
\hline
& \textbf{\small Approximation Ratio} & \textbf{\small Time Complexity}\\
\hline
\small {\algrd} (randomized) & $(k + 1)^2/k \leq k + 3$ & $O(n + nr/k)$\\
\hline
\small {\algdt} (deterministic) & $k + O(\sqrt{k})$ & $O(nr\sqrt{k})$\\
\hline
\small \cite{Mirzasoleiman2016} & $\approx 2k$ & $O(nrk)$\\
\hline
\small \cite{Gupta2010} & $\approx 3k$ & $O(nrk)$\\
\hline
\small Inapproximability & $(1 - e^{-k})^{-1} - \varepsilon \geq k + \nicefrac{1}{2} - \varepsilon$ & $-$\\
\hline
\end{tabular}
\end{center}
\end{table}

Finally, we compare the performance of \algdt and \algrd against FATNOM, the current state of the art algorithm introduced by \cite{Mirzasoleiman2016}. We test these algorithms on a movie recommendation problem using the MovieLens dataset, which consists of over 20 million ratings of 27,000 movies by 138,000 users. We find that our algorithms provide solution sets of similar quality as FANTOM while running orders of magnitude faster. In fact, we observe that taking the best solution found by several independent executions of \algrd clearly yields the best trade-off between solution quality and computational cost. Moreover, our experimental results indicate that our faster algorithms could be applied to large scale problems previously intractable for other methods.

\paragraph{Organization:}
Section~\ref{sec:related_works} contains a brief summary of additional related work. A formal presentation of our main results and some necessary preliminaries are given in Section~\ref{sec:prelim}. Then, in Sections~\ref{sec:deterministic_alg} and~\ref{sec:randomized_alg} we present and analyze our deterministic and randomized approximation algorithms, respectively. The above mentioned experimental results comparing our algorithms with previously known algorithms can be found in Section~\ref{sec:experimental_results}. Finally, our hardness results appear in Appendix~\ref{sec:hardness}.

\section{Related Works} \label{sec:related_works}

Submodular maximization has been studied with respect to a few special cases of $k$-extendible system constraints. \citet{Lee2010} described a local search algorithm achieving an approximation ratio of $k + \epsilon$ for maximizing a monotone submodular function subject to the intersection of any $k$ matroid constraints, and explained how to use multiple executions of this algorithm to get an approximation ratio of $k + 1 + \frac{1}{k + 1} + \varepsilon$ for this problem even when the objective function is non-monotone. Later, \citet{Feldman2011} showed, via a different analysis, that the same local search algorithm can also be used to get essentially the same results for the maximization of a submodular function subject to a subclass of  $k$-extendible system constraints known as $k$-exchange system constraints (this class of constraints captures the intersection of multiple matching and strongly orderable matroid constraints). 
%
%
For $k \geq 4$, the approximation ratio of \citep{Feldman2011} for the case of a monotone submodular objective was later improved by \citet{Ward2012} to $(k + 3)/2 + \varepsilon$.

An even more special case of $k$-extendible systems are the simple matroid constraints. In their classical work, \citet{Nemhauser1978} showed that the natural greedy algorithm gives an approximation ratio of $1 - 1/e$ for maximizing a monotone submodular function subject to a uniform matroid constraint (also known as cardinality constraint), and \citet{Calinescu2011} later obtained the same approximation ratio for general matroid constraints using the Continuous Greedy algorithm. Moreover, both results are known to be optimal~\citep{Nemhauser1978b}. However, optimization guarantees for non-montone submodular maximization are much less well understood.  After a long series of works~\citep{Vondrak2013,Gharan2011,Feldman2011b,Ene2016}, the current best approximation ratio for maximizing a non-monotone submodular function subject to a matroid constraint is $0.385$ \citep{Buchbinder2016}. In contrast, the state of the art inapproximability result for this problem is $0.478$~\citep{Gharan2011}.  

Recently, there has also been a lot of interest in developing fast algorithms for maximizing submodular functions. \citet{Badanidiyuru14} described algorithms that achieve an approximation ratio of $1 - 1/e - \varepsilon$ for maximizing a monotone submodular function subject to uniform and general matroid constraints using time complexities of $O_{\varepsilon}(n \log k)$ and $O_{\varepsilon}(nr\log^2n)$, respectively ($O_\varepsilon$ suppresses a polynomial dependence on $\varepsilon$).\footnote{\citet{Badanidiyuru14} also describe a fast algorithm for maximizing a monotone submodular function subject to a knapsack constraint. However, the time complexity of this algorithm is exponential in $1/\varepsilon$, and thus, its contribution is mostly theoretical.} For uniform matroid constraints, \citet{Mirzasoleiman15} showed that one can completely get rid of the dependence on $r$, and get an algorithm with the same approximation ratio whose time complexity is only $O_{\varepsilon}(n)$. Independently,~\citet{Buchbinder15} showed that a technique similar to~\citet{Mirzasoleiman15} can be used to get also $(e^{-1} - \varepsilon)$-approximation for maximizing a \emph{non-monotone} submodular function subject to a cardinality constraint using a time complexity of $O_{\varepsilon}(n)$. \citet{Buchbinder15} also described a different $(1 - 1/e - \varepsilon)$-approximation algorithm for maximizing a monotone submodular function subject to a general matroid constraint. The time complexity of this algorithm is $O_{\varepsilon}(r^2 + n\sqrt{r}\log^2 n)$ for general matroids, and it can be improved to $O_{\varepsilon}(r\sqrt{n}\log n + n\log^2 n)$ for generalized partition matroids.

\section{Preliminaries and Main Results} \label{sec:prelim}

In this section we formally describe our main results. However, before we can do that, we first need to present some basic definitions and other preliminaries.

\subsection{Preliminaries}

For a set $A$ and element $e$, we often write the union $A \cup \{e\}$ as $A + e$ for simplicity. Additionally, we say that $f$ is a \emph{real-valued set function} with \emph{ground set} $\N$ if $f$ assigns a real number to each subset of $\N$. We are now ready to introduce the class of submodular functions.

\begin{definition} Let $\N$ be a finite set. A real-valued set function $f: 2^\N \rightarrow \mathbb{R}$ is \emph{submodular} if, for all $X, Y \subseteq \N$,
\begin{equation} \label{eq:submodular_definition}
f(X) + f(Y) \geq f(X \cap Y) + f(X \cup Y) \enspace.
\end{equation} 
Equivalently, for all $A \subseteq B \subseteq \N$ and $e \in \N \setminus B$, 
\begin{equation} \label{eq:diminishing_returns}
f(A + e) - f(A) \geq f(B + e) - f(B) \enspace.
\end{equation}
\end{definition}
While definitions~(\ref{eq:submodular_definition}) and~(\ref{eq:diminishing_returns}) are equivalent, the latter one, which is known as the \emph{diminishing returns property}, tends to be more helpful and intuitive in most situations. Indeed, the fact that submodular functions capture the notion of diminishing returns is one of the key reasons for the usefulness of submodular maximization in combinatorial optimization and machine learning.
Due to the importance and usefulness of the diminishing returns property, it is convenient to define, for every $e \in \N$ and $S \subseteq \N$, $\Delta f(e | S) \coloneqq f(S + e) - f(S)$.
In this paper we only consider the maximization of submodular functions which are \emph{non-negative} (i.e., $f(A) \geq 0$ for all $A \subseteq \N$).\footnote{See \citep{Feige2011} for an explanation why submodular functions that can take negative values cannot be maximized even approximately.}


As explained above, we consider submodular maximization subject to two kinds of constraints: $k$-system and $k$-extendible system constraints. Both kinds can be cast as special cases of a more general class of constraints known as \emph{independence system} constraints. 
Formally, an independence system is a pair $(\N, \mathcal{I})$, where $\N$ is a finite set and $\cI$ is a non-empty subset of $2^\N$ having the property that $A \subseteq B \subseteq \cN$ and $B \in \mathcal{I}$ imply together $A \in \mathcal{I}$.

Let us now define some standard terminology related to independence systems. The sets of $\mathcal{I}$ are called the \emph{independent} sets of the independence system. Additionally, an independent set $B$ contained in a subset $X$ of the ground set $\cN$ is a \emph{base} of $X$ if no other independent set $A \subseteq X$ strictly contains $B$. Using this terminology we can now give the formal definition of $k$-systems.

\begin{definition}
An independence system $(\N, \mathcal{I})$ is called a \emph{$k$-system} if for every set $X \subseteq \N$ the sizes of the bases of $X$ differ by at most a factor of $k$. More formally, $|B_1| / |B_2| \leq k$ for every two bases $B_1, B_2$ of $X$.
\end{definition}

An important special case of $k$-systems are the $k$-extendible systems, which were first introduced by \cite{Mestre2006}. We say that an independent set $B$ is an \emph{extension} of an independent set $A$ if $B$ strictly contains $A$.

\begin{definition}
An independence system $(\N, \mathcal{I})$ is {$k$-extendible} if for every independent set $A \in \mathcal{I}$, an extension $B$ of this set and an element $e \notin A$ obeying $A \cup \{e\} \in \mathcal{I}$ there must exist a subset $Y \subseteq B \setminus A$ with $|Y| \leq k$ such that $B \setminus Y \cup \{e\} \in \mathcal{I}$.
\end{definition}

Intuitively, an independence system is $k$-extendible if adding an element $e$ to an independent set $A$ requires the removal of at most $k$ other elements in order to keep the resulting set independent. 
As is shown by~\cite{Mestre2006}, the intersection of $k$ matroids defined on a common ground set is always a $k$-extendible system. The converse is not generally true (except in the case of $k = 1$, since every $1$-system is a matroid). 



\subsection{Main Contributions} \label{subsec:main_contributions}
Our main contributions in this paper are two efficient algorithms for submodular maximization: one deterministic and one randomized. The following theorem formally describes the properties of our randomized algorithm. Recall that $n$ is the size of the ground set and $r$ is the size of the maximal feasible set.

\begin{theorem} \label{thm:randomized_alg}
Let $f\colon 2^\cN \to \nnR$ be a non-negative submodular function, and let $(\N, \mathcal{I})$ be a $k$-extendible system. Then, there exists an $O(n + nr / k)$ time algorithm for maximizing $f$ over $(\N, \cI)$ whose approximation ratio is at least $\frac{(k+1)^2}{k}$. Moreover, if the function $f$ is also monotone or linear, then the approximation ratio of the algorithm improves to $k + 1$ and $k$, respectively (in the case of a linear objective, the improvement requires a minor modification of the algorithm).
\end{theorem}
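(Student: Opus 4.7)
My plan is to introduce \algrd\ as the natural two-phase procedure: include each element of $\cN$ in a random set $\cN' \subseteq \cN$ independently with probability $q$ (ultimately $q = 1/(k+1)$), then run the classical greedy algorithm on the instance restricted to $\cN'$, adding at each step the element of $\cN'$ having the largest strictly positive marginal gain subject to preserving independence. The running-time bound is immediate: sampling takes $O(n)$ time, and the greedy loop performs at most $r$ iterations, each scanning an expected $O(qn) = O(n/k)$ candidates, yielding the claimed $O(n + nr/k)$ function evaluations.

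For the approximation analysis, the deterministic ingredient is a classical exchange lemma for $k$-extendible systems (implicit in \citet{Mestre2006}): for any $T \in \cI$ with $T \subseteq \cN'$, there is a mapping $\phi \colon T \setminus S \to S$ in which each $s_i \in S$ has at most $k$ preimages, and $\phi(o) = s_i$ implies, by the greedy selection rule, $\Delta f(o | S_{i-1}) \leq \Delta f(s_i | S_{i-1})$. Setting $T = \OPT \cap \cN'$ (independent by downward closure) and telescoping against $f(S) = \sum_i \Delta f(s_i | S_{i-1})$ together with submodularity yields the deterministic pointwise inequality
\[
f\bigl(S \cup (\OPT \cap \cN')\bigr) \leq (k+1)\, f(S) \enspace,
\]
with a sharper variant giving $f(\OPT \cap \cN') \leq k \cdot f(S)$ for linear objectives (made possible by the ``minor modification'' mentioned in the statement, e.g., initializing $S$ with the best feasible singleton in $\cN'$).

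The probabilistic bridge to $f(\OPT)$ is the standard random-subset lemma: for non-negative submodular $f$ and any random $R \subseteq \cN$ with $\Pr[e \in R] \leq q$ for every $e$, every fixed $T \subseteq \cN$ satisfies $\mathbb{E}[f(R \cup T)] \geq (1-q)\, f(T)$. Applying this with $R = \cN'$ and $T = \OPT$, after conditioning on $\cN' \cap \OPT$ so that the remaining randomness acts only on $\cN \setminus \OPT$, combines with the exchange inequality above to yield the expected approximation guarantee stated in the theorem when $q = 1/(k+1)$. For monotone $f$, monotonicity directly gives $f(\OPT) \leq f(\OPT \cup S)$, which bypasses the random-subset step and yields the cleaner $k+1$ ratio; the linear case further inherits the savings from the sharper exchange inequality to yield $k$.

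The main obstacle I foresee is reconciling the two sets appearing in the deterministic and probabilistic inequalities: the exchange argument controls $f(S \cup (\OPT \cap \cN'))$ while the random-subset lemma controls $f(S \cup \OPT)$, and a naive comparison through submodularity only yields a $\Theta(k)$ ratio with a suboptimal constant. The cleanest resolution, which I would implement, is to perform the random-subset step conditional on $\cN' \cap \OPT$ and apply it only to the randomness in $\cN' \setminus \OPT$, ensuring that the two inequalities involve comparable sets and that the constant factor in the final ratio matches the optimized $(k+1)^2/k$.
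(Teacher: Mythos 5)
There is a genuine gap, and it sits exactly at the point you flagged as the ``main obstacle.'' Your deterministic exchange argument only controls $f(S \cup (\OPT \cap \N'))$, and your proposed fix---conditioning on $\N' \cap \OPT$ and applying the random-subset lemma to the randomness in $\N' \setminus \OPT$---yields at best $\mathbb{E}[f(S \cup (\OPT \cap \N'))] \geq \bigl(1 - \frac{1}{k+1}\bigr)\,\mathbb{E}[f(\OPT \cap \N')]$. You must then relate $\mathbb{E}[f(\OPT \cap \N')]$ to $f(\OPT)$, and this is where a factor of $k$ is irretrievably lost: already for a linear objective $\mathbb{E}[f(\OPT \cap \N')] = \frac{1}{k+1} f(\OPT)$, so chaining your inequalities gives $\mathbb{E}[f(S)] \geq \frac{k}{(k+1)^3} f(\OPT)$, i.e.\ an approximation ratio of order $k^2$, not $(k+1)^2/k$. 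The same hole undermines your monotone claim: monotonicity does give $f(S \cup \OPT) \geq f(\OPT)$, but the greedy guarantee says nothing about $f(S \cup \OPT)$, only about $f(S \cup (\OPT \cap \N'))$, from which monotonicity only extracts $f(\OPT \cap \N')$---again a $\frac{1}{k+1}$ fraction of $f(\OPT)$ in the worst case. The problem is structural: in the two-phase view, the roughly $\frac{k}{k+1}$ fraction of $\OPT$ that is never sampled is invisible to the greedy run, so its lost value cannot be charged against any gain of the algorithm, no matter how the sampling lemma is conditioned.

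The paper escapes this precisely by refusing to analyze the algorithm in two phases. It passes to the equivalent process of Algorithm~\ref{alg:equiv_sample_greedy}, in which greedy scans the whole ground set and each element's coin is flipped only at the moment it is considered, while a surrogate $O$ of $\OPT$ is maintained satisfying \ref{itm:O_ind}--\ref{itm:not_considered}. The crucial point is timing: when an element $u$ of $O \setminus S$ is considered and rejected by the coin, it leaves $O$ at a moment when its marginal with respect to the current solution is at most $\Delta f(u \mid S_u)$; when $u$ is accepted, at most $k$ elements leave $O$, each again with marginal at most $\Delta f(u \mid S_u)$. Both events cost at most $\frac{k}{k+1}\,\Delta f(u \mid S_u)$ in expectation, which is the charging inequality of Lemma~\ref{lem:link}; combined with Lemmas~\ref{lem:lower_bound_S} and~\ref{lem:exp_S} it gives $\mathbb{E}[f(S)] \geq \mathbb{E}[f(S \cup \OPT)] - k\,\mathbb{E}[f(S)]$, and the random-subset lemma is then applied with the \emph{full} $\OPT$, producing $k+1$ in the monotone case and $(k+1)^2/k$ in general (and $k$ for linear objectives with the $1/k$ sampling rate). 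Your algorithm, running-time bound, and exchange lemma for $k$-extendible systems all match the paper, but without this coupled ``defer the coin to consideration time'' analysis (or an equally strong substitute for charging the unsampled part of $\OPT$), the claimed ratios are not established.
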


Our deterministic algorithm uses an algorithm for unconstrained submodular maximization as a subroutine, and its properties depend on the exact properties of this subroutine. Let us denote by $\alpha$ the approximation ratio of this subroutine and by $T(n)$ its time complexity given a ground set of size $n$. Then, as long as the subroutine is deterministic, our algorithm has the following properties.

\begin{theorem} \label{thm:deterministic_alg}
Let $f\colon 2^\cN \to \nnR$ be a non-negative submodular function, and let $(\N, \mathcal{I})$ be a $k$-system. Then, there exists a deterministic $O((nr + T(r))\sqrt{k})$ time algorithm for maximizing $f$ over $(\N, \cI)$ whose approximation ratio is at least $k + \left(1 + \frac{\alpha}{2}\right) \sqrt{k} + 2 + \frac{\alpha}{2} + O(1/\sqrt{k})$.
\end{theorem}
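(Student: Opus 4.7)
The plan is to analyze \algdt, whose structure closely mirrors those of \cite{Gupta2010} and \cite{Mirzasoleiman2016}: for $i = 1, \ldots, \ell$ with $\ell = \Theta(\sqrt{k})$, the algorithm runs the classical greedy procedure on the shrinking ground set $V_i \coloneqq \cN \setminus \bigcup_{j < i} S_j$ to obtain an independent set $S_i \in \cI$, then invokes the unconstrained submodular maximization (USM) subroutine on $S_i$ to obtain a subset $S_i' \subseteq S_i$, and finally returns the best among $\{S_i, S_i'\}_{i=1}^{\ell}$. The resulting set is independent since $S_i \in \cI$ and $S_i' \subseteq S_i \in \cI$. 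Each greedy run costs $O(nr)$ function evaluations (it selects at most $r$ elements, each requiring a scan of at most $n$ candidates), and each USM call costs at most $T(|S_i|) \le T(r)$, so the total running time is $O\bigl((nr + T(r))\sqrt{k}\bigr)$.

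For the approximation analysis, set $\OPT_i \coloneqq \OPT \setminus \bigcup_{j<i} S_j$. The proof rests on three ingredients. First, the classical $k$-system greedy analysis---which, crucially, does not invoke monotonicity---yields $(k+1) f(S_i) \ge f(S_i \cup T)$ for every independent set $T \subseteq V_i$, by partitioning $T$ into at most $k$ elements per selected $s_j \in S_i$ via the $k$-system exchange property and using the greedy rule $\Delta f(s_j \mid S_i^{(j-1)}) \ge \Delta f(t \mid S_i^{(j-1)})$ together with submodularity of marginals. Second, iterating the submodularity inequality $f(A) + f(B) \ge f(A \cup B) + f(A \cap B)$ with $A = \OPT_{i+1}$ and $B = \OPT \cap S_i$ (which are disjoint and whose union is $\OPT_i$) and using non-negativity yields the telescoping bound
\begin{equation*}
f(\OPT) \;\le\; f(\OPT_{\ell+1}) + \sum_{i=1}^{\ell} f(\OPT \cap S_i).
\end{equation*}
Third, since $\OPT \cap S_i \subseteq S_i$ is contained in the ground set fed to the USM subroutine, $\alpha f(S_i') \ge f(\OPT \cap S_i)$ for every $i$.

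The crucial step---and the reason $\Theta(\sqrt{k})$ repetitions suffice rather than $\Theta(k)$---is bounding the residual $f(\OPT_{\ell+1})$. Since $\OPT_{\ell+1} \subseteq V_i$ for every $i \le \ell$, the first ingredient gives $(k+1) f(S_i) \ge f(S_i \cup \OPT_{\ell+1})$. Summing over $i$ and applying the standard iterated submodularity identity $\sum_{i} f(A \cup B_i) \ge f\bigl(A \cup \bigcup_i B_i\bigr) + (\ell-1) f(A)$ with $A = \OPT_{\ell+1}$ and $B_i = S_i$ yields
\begin{equation*}
(k+1)\sum_{i=1}^{\ell} f(S_i) \;\ge\; \sum_{i=1}^{\ell} f(S_i \cup \OPT_{\ell+1}) \;\ge\; (\ell-1)\, f(\OPT_{\ell+1}) + f\Bigl( \OPT_{\ell+1} \cup \bigcup\nolimits_i S_i \Bigr) \;\ge\; (\ell-1)\, f(\OPT_{\ell+1}),
\end{equation*}
whence $f(\OPT_{\ell+1}) \le \tfrac{k+1}{\ell-1} \sum_i f(S_i)$. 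Letting $M$ denote the value of the algorithm's output and using $\sum_i f(S_i),\, \sum_i f(S_i') \le \ell M$, we conclude $f(\OPT) \le \bigl( \tfrac{(k+1)\ell}{\ell-1} + \alpha \ell \bigr) M$, and choosing $\ell - 1 \approx \sqrt{(k+1)/\alpha}$ produces a ratio of the form $k + O(\sqrt{k})$ matching the statement. The main obstacle is extracting the precise constants $1 + \alpha/2$ and $2 + \alpha/2$ quoted in the theorem: the straightforward AM-GM balancing above yields a somewhat looser coefficient of $\sqrt{k}$, and matching the claimed bound requires a more refined aggregation---most likely a weighted combination of the greedy and USM contributions, or a sharper bound on $f(\OPT_{\ell+1})$ that exploits how the sets $S_i \cup \OPT_{\ell+1}$ interact across iterations.
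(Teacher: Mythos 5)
Your setup, the running-time accounting, and the three ingredients you identify (the greedy bound $(k+1)f(S_i)\ge f(S_i\cup T)$ for independent $T\subseteq\N_i$, the USM guarantee $\alpha f(S_i')\ge f(\OPT\cap S_i)$, and a disjointness/averaging bound for the residual) are all correct and are the same building blocks the paper uses. However, your final aggregation does not reach the stated constants, and you correctly sense this yourself. The single telescoping chain $f(\OPT)\le f(\OPT_{\ell+1})+\sum_i f(\OPT\cap S_i)$ charges each USM term with full weight against one copy of $f(\OPT)$, which leads to a ratio of the form $(k+1)+\frac{k+1}{\ell-1}+\alpha\ell$; its minimum over $\ell$ is $k+2\sqrt{\alpha}\sqrt{k}+O(1)$, which for $\alpha=3$ is roughly $k+3.46\sqrt{k}$, strictly weaker than the claimed $k+\bigl(1+\frac{\alpha}{2}\bigr)\sqrt{k}+2+\frac{\alpha}{2}+O(1/\sqrt{k})=k+2.5\sqrt{k}+O(1)$. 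Since the theorem asserts those specific constants, this is a genuine gap rather than mere looseness.

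The missing idea is to apply the averaging argument to $\OPT$ itself rather than only to the residual $\OPT_{\ell+1}$. The paper first establishes $\sum_{i=1}^{\ell}f(S_i\cup\OPT)\ge(\ell-1)f(\OPT)$ (your own ``iterated submodularity identity'' with $A=\OPT$ and $B_i=S_i$ gives exactly this, since the $S_i$ are disjoint), and only then decomposes each term separately via $f(A\cup(B\cap C))+f(B\setminus C)\ge f(A\cup B)$ with $A=S_i$, $B=\OPT$, $C=\N_i$, followed by subadditivity of $f(\OPT\setminus\N_i)=f\bigl(\cup_{j<i}(\OPT\cap S_j)\bigr)$. This yields $f(S_i\cup\OPT)\le f(S_i\cup(\OPT\cap\N_i))+\sum_{j<i}f(\OPT\cap S_j)$, so the $j$-th USM term is charged only in the $\ell-j$ inequalities with $i>j$, for a total USM weight of $\ell(\ell-1)/2$:
\[
(\ell-1)f(\OPT)\le(k+1)\sum_{i=1}^{\ell}f(S_i)+\alpha\sum_{i=1}^{\ell}\sum_{j<i}f(S_j')\le\Bigl[(k+1)\ell+\tfrac{\alpha}{2}\ell(\ell-1)\Bigr]f(T)\enspace.
\]
After dividing by $\ell-1$, the USM contribution is $\frac{\alpha\ell}{2}$ instead of your $\alpha\ell$, and choosing $\ell=\lceil\sqrt{k}\rceil$ gives exactly the ratio $k+\bigl(1+\frac{\alpha}{2}\bigr)\sqrt{k}+2+\frac{\alpha}{2}+O(1/\sqrt{k})$. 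Everything else in your argument can be kept as is.
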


\citet{Buchbinder2015} provide a deterministic linear-time algorithm for unconstrained submodular maximization having an approximation ratio of $3$. Using this deterministic algorithm as the subroutine, the approximation ratio of the algorithm guaranteed by Theorem~\ref{thm:deterministic_alg} becomes $k + \frac{5}{2} \sqrt{k} + \frac{7}{2} + O(1/\sqrt{k})$ and its time complexity becomes $O(nr\sqrt{k})$. We note that \citet{Buchbinder2016b} recently came up with a deterministic algorithm for unconstrained submodular maximization having an optimal approximation ratio of $2$. Using this algorithm instead of the deterministic algorithm of~\citet{Buchbinder2015} could marginally improve the approximation ratio guaranteed by Theorem~\ref{thm:deterministic_alg}. However, the algorithm of~\citet{Buchbinder2016} has a quadratic time complexity, and thus, it is less practical. It is also worth noting that \citet{Buchbinder2015} also describe a randomized linear-time algorithm for unconstrained submodular maximization which again achieves the optimal approximation ratio of $2$. It turns out that one can get a randomized algorithm whose approximation ratio is $k + 2\sqrt{k} + 3 + O(1/\sqrt{k})$ by plugging the randomized algorithm of \citep{Buchbinder2015} as a subroutine into the algorithm guaranteed by Theorem~\ref{thm:deterministic_alg}. However, the obtained randomized algorithm is only useful for the rare case of a $k$-system constraint which is not also a $k$-extendible system constraint since Theorem~\ref{thm:randomized_alg} already provides a randomized algorithm with a better guarantee for constraints of the later kind.

We end this section with our inapproximability results for maximizing linear and submodular functions over $k$-extendible systems. Recall that these inapproximability results nearly match the approximation ratio of the algorithm given by Theorem~\ref{thm:randomized_alg}.

\begin{theorem} \label{thm:linear_hardness}
There is no polynomial time algorithm for maximizing a linear function over a $k$-extendible system that achieves an approximation ratio of $k - \varepsilon$ for any constant $\varepsilon > 0$.
\end{theorem}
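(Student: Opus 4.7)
The plan is to prove Theorem~\ref{thm:linear_hardness} via an information-theoretic lower bound in the independence-oracle model, following the approach used by \citet{Badanidiyuru14} for $k$-systems. I would exhibit a randomized family of instances---each consisting of a $k$-extendible system $(\cN, \cI)$ and a linear objective $f$---such that $\OPT$ equals some fixed value $v$, while any algorithm making $\mathrm{poly}(n)$ membership queries to $\cI$ outputs in expectation an independent set of value at most $v / (k - \varepsilon)$. Hardness against randomized polynomial-time algorithms then follows via Yao's minimax principle.

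A natural starting point is to draw a hidden $R \subseteq \cN$ of size $k$ uniformly at random, set $\cI = \{A \subseteq \cN : |A| \leq 1 \text{ or } A \subseteq R\}$, and let $f \equiv 1$. A short case analysis confirms that $(\cN, \cI)$ is $k$-extendible. If $|B| \leq 1$ then $B$ is a singleton and we take $Y = B$; if $B \subseteq R$ and the swap element $e$ lies in $R$ then $Y = \emptyset$ works; and if $B \subseteq R$ but $e \notin R$, then $A + e \in \cI$ forces $A = \emptyset$ (since $A + e \not\subseteq R$, hence $|A + e| \leq 1$), and $Y = B$ suffices, as $|Y| \leq |R| = k$ and $B \setminus Y + e = \{e\} \in \cI$. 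Moreover, $\OPT = f(R) = k$ while every other set in $\cI$ has value at most $k - 1$.

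The main obstacle lies in the query lower bound, because the naive construction above is in fact too permissive: a polynomial-time algorithm can discover $R$ piecemeal via pair queries. Each pair $\{a,b\}$ is independent iff both endpoints lie in $R$, so after roughly $O(n^2/k^2)$ random pair queries an algorithm finds a pair inside $R$ and then extends to the entire set with $O(n)$ further queries. Defeating this piecemeal-discovery attack while retaining $k$-extendibility is the crux of the proof; this would likely require embedding $R$ within a richer randomized structure---for example, as one of exponentially many equiprobable bases of a random matroid, or as a hidden hyperedge in a $k$-partite hypergraph-matching instance---so that no short sequence of membership queries reveals useful information about $R$. Once such a refined construction is in place and the corresponding query lower bound is established, the approximation ratio follows routinely: the algorithm's expected output has value at most $1 + o(1)$ while $\OPT = k$, so the ratio is at least $k / (1 + o(1)) > k - \varepsilon$ for $n$ sufficiently large, contradicting the assumed $(k - \varepsilon)$-approximation.
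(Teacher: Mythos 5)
Your high-level framing---exhibit a randomized family of $k$-extendible instances with a hidden large optimum, show that polynomially many independence-oracle queries reveal nothing about it, and invoke Yao's principle---is indeed the same skeleton the paper uses. But the proposal has a genuine gap exactly where you flag it: the actual construction and the accompanying indistinguishability lemma are missing, and these are the entire content of the proof. Your warm-up construction (a hidden set $R$ of size $k$ with $\cI = \{A : |A| \leq 1 \text{ or } A \subseteq R\}$) is correctly verified to be $k$-extendible, but as you note it is discoverable by pair queries in $O(n^2)$ time, so it proves nothing; and the replacements you gesture at (a hidden hyperedge, a hidden basis of a random matroid) are not constructed, not verified to be $k$-extendible, and not shown to resist polynomially many queries. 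The structural reason the ``small hidden set'' idea keeps failing is that a constant-size optimum forces the detection threshold to be within a constant of the typical query statistics, so no concentration argument is available; escaping this requires hiding a structure of size $\Theta(n)$, which is a different construction, not a refinement of yours.

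The paper resolves this as follows: the ground set is split into $h$ blocks $H_1, \dotsc, H_h$ of $km$ elements each, and in the system $\MM(k,h,m)$ elements of the hidden block $H_1$ cost only $1/k$ units of a budget $m$ once more than $2km/h$ of them are selected (via the function $g_{m,k}$), so $\MM(k,h,m)$ has a base of size roughly $mk$, while the comparison system $\MM'(k,h,m)$ is simply the uniform matroid of rank $m$. After a random relabeling, any \emph{fixed} queried set $S$ has $|S \cap H_1|$ hypergeometrically distributed with mean $|S|/h$, so the probability that it reaches the $2km/h$ threshold---the only way the two systems can answer differently---is $e^{-\Omega(m)}$; a union bound over polynomially many queries plus Yao's lemma then shows the algorithm's output is, with high probability, independent in $\MM'$ and hence of size at most $m$, giving ratio $k(1-2k/h) - o(1) > k - \varepsilon$ for large $h$. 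Note also that proving $k$-extendibility of $\MM(k,h,m)$ is itself a nontrivial step (it hinges on $k^{-1} \leq g_{m,k}(x+1)-g_{m,k}(x) \leq 1$), so even once you settle on a richer construction, that verification cannot be waved through. As it stands, your proposal establishes the easy outer shell of the argument but not the theorem.
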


\begin{theorem} \label{thm:submodular_hardness}
There is no polynomial time algorithm for maximizing a non-negative monotone submodular function over a $k$-extendible system that achieves an approximation ratio of $(1 - e^{-1/k})^{-1} - \varepsilon$ for any constant $\varepsilon > 0$.
\end{theorem}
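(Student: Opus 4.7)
My plan is to prove this inapproximability via a value-oracle hardness argument in the spirit of the classical Nemhauser--Wolsey proof of the $(1-1/e)^{-1}$ hardness for cardinality-constrained monotone submodular maximization (which is essentially the case $k=1$). I would exhibit a $k$-extendible system $(\N,\cI)$ together with a family $\{f_\sigma\}_\sigma$ of non-negative monotone submodular functions indexed by a hidden ``planted'' structure $\sigma$ drawn uniformly at random, along with a $\sigma$-independent ``decoy'' function $\bar f$, such that (i) the planted optimum is feasible in $\cI$ with $f_\sigma$-value $1$, (ii) each $f_\sigma$ coincides with $\bar f$ on every subset that a polynomial-query algorithm queries except with $o(1)$ probability, and (iii) $\max_{S\in \cI}\bar f(S)\le 1-e^{-1/k}+o(1)$ as the instance size grows.

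For Step~1 (choosing the constraint), I would take $\N = [m]\times[k]$ with $\cI=\{S\subseteq \N : |\pi_1(S)|\le m\}$, where $\pi_1$ is projection onto the first coordinate and $m$ is taken sufficiently large. A short case analysis confirms $k$-extendibility: adding an element whose first coordinate is already present requires no deletions, while adding an element with a new first coordinate, when $|\pi_1(B)|$ is saturated, allows one to evict all (at most $k$) elements of $B\setminus A$ sharing any single ``new'' first coordinate. This system has maximum feasible size $r=mk$ and a rich product symmetry group $G=\prod_{i\in[m]}\mathrm{Sym}([k])$ permuting the $k$-copies within each fiber, which is the symmetry I would exploit.

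For Step~2 (the function family), I would take $\bar f$ to be a symmetric product/coverage-type submodular function whose value at the ``spread-out'' feasible set $\{(i,1):i\in [m]\}$ realises, in the limit, the target quantity $1-e^{-1/k}$; a natural candidate is a multilinear-extension-style $\bar f(S)\approx 1-(1-1/mk)^{\text{(weighted size)}}$ calibrated so that a feasible $S$ with $|\pi_1(S)|=m$ and at most one representative per fiber yields $1-(1-1/(mk))^{m}\to 1-e^{-1/k}$. The planted $f_\sigma$ is then obtained by perturbing $\bar f$ on a small ``revealing'' family of subsets dictated by $\sigma$, in such a way that $f_\sigma$ remains submodular/monotone and attains value $1$ on the planted feasible optimum (the $G$-orbit representative of $\sigma$), but differs from $\bar f$ only on a sub-exponentially small collection of subsets. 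Then, for Step~3, a straightforward union bound over the polynomially many oracle queries gives that a uniformly random $\sigma$ escapes detection with probability $1-o(1)$, so the expected value output by any polynomial-query algorithm is at most $\max_{S\in\cI}\bar f(S)+o(1)\le 1-e^{-1/k}+o(1)$, and the ratio to the planted optimum $1$ yields the claimed $(1-e^{-1/k})^{-1}-\varepsilon$ bound.

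The main obstacle is the delicate calibration of $\bar f$ and the planted perturbations in Step~2. The family must simultaneously satisfy submodularity and non-negativity of every $f_\sigma$, feasibility and unit value of the planted optimum inside $\cI$, a negligible footprint of the perturbation (so indistinguishability holds against any polynomial algorithm), and the precise limit $\max_{S\in\cI}\bar f(S)\to 1-e^{-1/k}$ rather than some weaker $1-e^{-\Theta(1/k)}$. Getting the exponent exactly $-1/k$ requires matching the ``width $k$'' of the partition into fibers against the ``height $m$'' of the cardinality-like projection constraint, so that the product $\bigl(1-\tfrac{1}{mk}\bigr)^{m}$ gives the limit $e^{-1/k}$; this is the $k$-dependent analogue of the $(1-1/m)^{m}\to 1/e$ calculation underlying the classical $k=1$ bound.
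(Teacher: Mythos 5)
Your proposal has a genuine gap, and it starts with the constraint itself: over the ground set $\N = [m]\times[k]$, the condition $|\pi_1(S)|\le m$ is satisfied by \emph{every} subset (the projection can never exceed $m$), so your independence system is the full power set. In particular the whole ground set is feasible, and by monotonicity any algorithm that outputs $\N$ attains the optimum value of every $f_\sigma$ exactly; no inapproximability can be extracted. Presumably you intended a larger ground set $[M]\times[k]$ with $M\gg m$, which does give a $k$-extendible system by essentially the case analysis you sketch. But even then the plan fails at Step~2 for a structural reason, not merely a ``delicate calibration'' one. Feasible sets in your (corrected) system may contain all $k$ elements of each of $m$ fibers, i.e.\ $mk$ elements. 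Any symmetric coverage-type decoy $\bar f$ in which each element accounts for a $1/(mk)$-fraction of a universe then admits feasible sets of value $1-(1-1/(mk))^{mk}\to 1-e^{-1}$, so $\max_{S\in\cI}\bar f(S)$ is at least $1-1/e$ rather than $1-e^{-1/k}$, and the resulting ratio degenerates to $(1-1/e)^{-1}$ independently of $k$. If instead you force $\bar f$ to depend only on $|\pi_1(S)|$, or you shrink the constraint to a rank-$m$ uniform matroid so that the decoy maximum really is $1-(1-1/(mk))^{m}$, then the planted optimum is also confined to $m$ elements (or $m$ fibers), and making those $m$ elements cover the whole universe forces them to individually cover a $1/m$-fraction while generic elements cover $1/(mk)$ --- a discrepancy already visible from singleton queries, destroying indistinguishability. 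The underlying obstruction is that a ratio of $(1-e^{-1/k})^{-1}$ requires the hidden optimum to be roughly $k$ times larger than any feasible set the algorithm can certify as feasible; with a fixed, transparent, symmetric constraint this is impossible, so the hardness cannot be planted in the objective alone.

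The paper resolves exactly this point by hiding structure in the \emph{constraint} as well. It builds two $k$-extendible systems on the same ground set $\N_{k,h,m}=\cup_{i=1}^h H_i(k,m)$ that are indistinguishable after a random permutation: the system $\MM(k,h,m)$, in which elements of a secret part $H_1$ cost only $1/k$ units of a budget of $m$ once enough of them are present (so $H_1$ hides an independent set of size about $km$), and the rank-$m$ uniform matroid $\MM'(k,h,m)$. Hypergeometric concentration shows a polynomial number of independence queries cannot tell them apart, which already yields the linear-objective bound of $k-\varepsilon$. For the submodular bound, the paper pairs these systems with two indistinguishable monotone submodular objectives $\hat F_h(y(S))$ and $\hat G_h(y(S))$ produced by the symmetry-gap lemma of Vondr\'{a}k applied to $f_h(S)=\min\{|S|,1\}$; the hidden $km$-element set in $H_1$ has value close to $1$, while any $m$-element set spread over the $h$ parts has value about $1-(1-\tfrac{1}{kh})^{h}\to 1-e^{-1/k}$. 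So your limit computation identifies the right constant, but the mechanism that makes it achievable --- two indistinguishable constraints with maximal feasible sets differing in size by a factor of $k$ --- is missing from your construction, and without it the argument cannot go through.
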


We note that the inapproximability results given by the last two theorems apply also to a fractional relaxation of the corresponding problems known as the multilinear relaxation. This observation has two implications. The first of these implications is that even if we were willing to settle for a fractional solution, still we could not get a better than $(1 / (1 - e^{-k}))$-approximation for maximizing a monotone submodular function subject to a $k$-extendible system constraint. Interestingly, this approximation ratio can in fact be reached in the fractional case even for $k$-system constraints using the well known (computationally heavy) Continuous Greedy algorithm of \citet{Calinescu2011}. Thus, we get the second implication of the above observation, which is that further improving our inapproximability results requires the use of new techniques since the current technique cannot lead to different results for the fractional and non-fractional problems.

\section{Repeated Greedy: An Efficient Deterministic Algorithm}
 \label{sec:deterministic_alg}

In this section, we present and analyze the deterministic algorithm for maximizing a submodular function $f$ subject to a $k$-system constraint whose existence is guaranteed by Theorem~\ref{thm:deterministic_alg}. Our algorithm works in iterations, and in each iteration it makes three operations: executing the greedy algorithm to produce a feasible set, executing a deterministic unconstrained submodular maximization algorithm on the output set of the greedy algorithm to produce a second feasible set, and removing the elements of the set produced by the greedy algorithm from the ground set. After the algorithm makes $\ell$ iterations of this kind, where $\ell$ is a parameter to be determined later, the algorithm terminates and outputs the best set among all the feasible sets encountered during its iterations. A more formal statement of this algorithm is given as Algorithm~\ref{alg:repeated_greedy}.

\begin{algorithm}
\caption{Repeated Greedy($\N, f, \mathcal{I}, \ell$)}\label{alg:repeated_greedy}
Let $\N_1 \gets \N$.\\
\For{$i = 1$ \KwTo $\ell$}
{
	Let $S_i$ be the output of the greedy algorithm given $\N_i$ as the ground set, $f$ as the objective and $\mathcal{I}$ as the constraint.\\
	Let $S'_i$ be the output of a deterministic algorithm for unconstrained submodular maximization given $S_i$ as the ground set and $f$ as the objective.\\
	Let $\N_{i + 1} \gets \N_i \setminus S_i$.
}
\Return the set $T$ maximizing $f$ among the sets $\{S_i, S'_i\}_{i = 1}^\ell$.
\end{algorithm}


\begin{observation}
The set $T$ returned by Algorithm~\ref{alg:repeated_greedy} is independent.
\end{observation}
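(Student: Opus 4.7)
The plan is to show that every candidate set considered in the final maximization step is independent, which immediately implies that $T$ (being one of those candidates) is independent. There are two types of candidate sets to handle: the sets $S_i$ produced by the greedy algorithm, and the sets $S'_i$ produced by the unconstrained submodular maximization subroutine.

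For the sets $S_i$, I would observe that each one is the output of the greedy algorithm run with $\mathcal{I}$ as the constraint on the ground set $\N_i \subseteq \N$. The greedy algorithm, by design, only adds an element when doing so keeps the partial solution in $\mathcal{I}$, so $S_i \in \mathcal{I}$ for every $i$. (Strictly speaking one should also note that $\mathcal{I}$ restricted to $\N_i$ is still an independence system, so ``independent in the restriction'' coincides with ``independent in the original system'', but this is automatic from the definition of an independence system.)

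For the sets $S'_i$, the key point is that the unconstrained submodular maximization subroutine is invoked with ground set $S_i$, and therefore returns some subset $S'_i \subseteq S_i$. Since independence systems are downward-closed and $S_i \in \mathcal{I}$, any subset of $S_i$ is also independent; in particular $S'_i \in \mathcal{I}$.

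Combining the two cases, every set in the collection $\{S_i, S'_i\}_{i=1}^\ell$ lies in $\mathcal{I}$, and hence the maximizer $T$ among them is independent as well. The argument is essentially bookkeeping; there is no real obstacle, only the need to explicitly invoke downward closure for the $S'_i$ case.
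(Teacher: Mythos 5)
Your argument is correct and matches the paper's proof essentially verbatim: $S_i$ is independent because the greedy algorithm only builds independent sets, and $S'_i \subseteq S_i$ is independent by downward closure of the independence system, so the maximizer $T$ among these candidates is independent. Nothing further is needed.
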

\begin{proof}
For every $1 \leq i \leq \ell$, the set $S_i$ is independent because the greedy algorithm returns an independent set. Moreover, the set $S'_i$ is also independent since $(\N,\mathcal{I})$ is an independence system and the algorithm for unconstrained maximization must return a subset of its independent ground set $S_i$. The observation now follows since $T$ is chosen as one of these sets.
\end{proof}\vspace{0pt}

We now begin the analysis of the approximation ratio of Algorithm~\ref{alg:repeated_greedy}. Let $\OPT$ be an independent set of $(\cN, \cI)$ maximizing $f$, and let $\alpha$ be the approximation ratio of the unconstrained submodular maximization algorithm used by Algorithm~\ref{alg:repeated_greedy}. The analysis is based on three lemmata. The first of these lemmata states properties of the sets $S_i$ and $S'_i$ which follow immediately from the definition of $\alpha$ and known results about the greedy algorithm.\inConference{ Due to space constraints, the proofs of all three lemmata have been deferred to Appendix~\ref{app:missing_proofs_deterministic}.}

\newcommand{\knownApproxResultsLemma}{
For every $1 \leq i \leq \ell$, $f(S_i) \geq \frac{1}{k+1}f(S_i \cup (\OPT \cap \N_i))$ and $f(S_i') \geq f(S_i \cap \OPT)/\alpha$.
}
\begin{lemma} \label{lem:known_approx_results}
\knownApproxResultsLemma
\end{lemma}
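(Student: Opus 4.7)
The plan is to derive each of the two inequalities from an essentially off-the-shelf result: the second from the definition of the $\alpha$-approximation subroutine used in Line~4 of Algorithm~\ref{alg:repeated_greedy}, and the first from the classical $k$-system greedy analysis in the style of Fisher--Nemhauser--Wolsey.

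For the bound $f(S_i') \geq f(S_i \cap \OPT)/\alpha$ the argument is nearly definitional. Since $S_i'$ is produced by running an $\alpha$-approximation algorithm for unconstrained submodular maximization on ground set $S_i$, we have $\alpha \cdot f(S_i') \geq \max_{T \subseteq S_i} f(T)$, and plugging in $T = S_i \cap \OPT$ (which is a subset of $S_i$) yields the inequality.

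For the bound $f(S_i) \geq \frac{1}{k+1} f(S_i \cup (\OPT \cap \N_i))$, I would first observe that $\OPT \cap \N_i$ is independent, being a subset of the independent set $\OPT$, and lies inside $\N_i$. Hence the restriction of $(\N,\mathcal{I})$ to $\N_i$ is still a $k$-system, $S_i$ is precisely the greedy output on this restricted system, and $\OPT \cap \N_i$ plays the role of an independent reference set inside $\N_i$. It then suffices to invoke the known $k$-system greedy guarantee that holds even for non-monotone non-negative submodular objectives: for any independent $C$ contained in the ground set, the greedy output $S$ satisfies $(k+1)\,f(S) \geq f(S \cup C)$. If one prefers to reprove rather than cite this fact, the telescoping identity $f(S_i \cup C) - f(S_i) = \sum_{e \in C \setminus S_i} \Delta f(e \mid S_i \cup C_e)$ combined with submodularity reduces the task to a charging argument in which each element of $C \setminus S_i$ is assigned to at most $k$ elements of $S_i$ via the $k$-system property, and its marginal is bounded using the greedy rule at the time those chargees were selected.

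The only genuine subtlety is the non-monotonicity of $f$: the textbook statement $f(S) \geq \frac{1}{k+1} f(\OPT)$ requires monotonicity and cannot be used directly. The workaround, which is exactly why the lemma is phrased in terms of $f(S_i \cup (\OPT \cap \N_i))$, is to compare $S_i$ against $S_i \cup C$ rather than $C$ itself, so that every element added by the greedy algorithm contributes non-negatively and the usual charging still goes through. Beyond that bookkeeping, I do not anticipate any serious obstacle, and the lemma should follow from standard ingredients without further input.
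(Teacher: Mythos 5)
Your proposal is correct and follows essentially the same route as the paper: the second inequality is obtained definitionally from the $\alpha$-approximation guarantee applied to the candidate set $S_i \cap \OPT \subseteq S_i$, and the first inequality is the known $k$-system greedy guarantee $f(S) \geq \frac{1}{k+1} f(S \cup C)$ for any independent $C$ (Lemma~3.2 of \cite{Gupta2010}), applied to the restriction of $(\N,\mathcal{I})$ to $\N_i$ with $C = \OPT \cap \N_i$. Your remark about why the bound is stated against $S_i \cup (\OPT \cap \N_i)$ rather than $\OPT \cap \N_i$ (non-monotonicity) matches the role that cited lemma plays in the paper.
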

\newcommand{\knownApproxResultsProof}{
\begin{proof}
The set $S_i$ is the output of the greedy algorithm when executed on the $k$-system obtained by restricting $(\cN, \cI)$ to the ground set $\cN_i$. Note that $\OPT \cap \N_i$ is an independent set of this $k$-system. Thus, the inequality $f(S_i) \geq \frac{1}{k+1} f(S_i \cup (\OPT \cap \N_i))$ is a direct application of Lemma~3.2 of [\cite{Gupta2010}] which states that the sets $S$ obtained by running greedy with a $k$-system constraint must obey $f(S) \geq \frac{1}{k+1} f(S \cup C)$ for all independent sets $C$ of the $k$-system.

Let us now explain why the second inequality of the lemma holds. Suppose that $\OPT_i$ is the subset of $S_i$ maximizing $f$. Then,
\[f(S_i') \geq f(\OPT_i) / \alpha \geq f(S_i \cap \OPT) / \alpha \enspace, \]
where the first inequality follows since $\alpha$ is the approximation ratio of the algorithm used for unconstrained submodular maximization, and the second inequality follows from the definition of $\OPT_i$.
\end{proof}\vspace{0pt}
}\inArXiv{\knownApproxResultsProof}

The next lemma shows that the average value of the union between a set from $\{S_i\}_{i = 1}^\ell$ and $\OPT$ must be quite large. Intuitively this follows from the fact that these sets are disjoint, and thus, every ``bad'' element which decreases the value of $\OPT$ can appear only in one of them.

\newcommand{\sumGreedyResultsLemma}{
$\sum \limits_{i=1}^\ell f(S_i \cup \OPT) \geq (\ell-1) f(\OPT)$.
}
\begin{lemma} \label{lem:sum_greedy_results}
\sumGreedyResultsLemma
\end{lemma}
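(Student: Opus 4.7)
The first observation is that the sets $S_1, S_2, \ldots, S_\ell$ are pairwise disjoint: by construction $\N_{i+1} = \N_i \setminus S_i$ and $S_j \subseteq \N_j$, so any $S_j$ with $j > i$ lies in $\N_i \setminus S_i$ and hence misses $S_i$. This disjointness is what makes a union-style submodularity argument possible, and it is the natural structural feature to exploit, since the quantity we need to lower-bound is a sum of $f$-values at sets that share $\OPT$ but differ by disjoint ``noise'' sets $S_i$.

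My plan is to prove the stronger inequality
\[
\sum_{i=1}^{\ell} f(S_i \cup \OPT) \;\geq\; (\ell - 1)\, f(\OPT) + f\!\left(\OPT \cup \bigcup_{i=1}^{\ell} S_i\right),
\]
and then discard the last term using non-negativity of $f$. I would establish this by induction on $\ell$. The base case $\ell = 1$ is trivial. For the inductive step, I would combine the inductive hypothesis for the first $\ell$ sets with one further application of submodularity in the form
\[
f\!\left(\OPT \cup \textstyle\bigcup_{i=1}^{\ell} S_i\right) + f(\OPT \cup S_{\ell+1}) \;\geq\; f(\OPT) + f\!\left(\OPT \cup \textstyle\bigcup_{i=1}^{\ell+1} S_i\right),
\]
which follows from the inequality $f(X) + f(Y) \geq f(X \cap Y) + f(X \cup Y)$ applied to $X = \OPT \cup \bigcup_{i \leq \ell} S_i$ and $Y = \OPT \cup S_{\ell+1}$: disjointness of the $S_i$ ensures $X \cap Y = \OPT$, which is exactly what makes one full copy of $f(\OPT)$ appear on the right and thereby increment the coefficient from $\ell - 1$ to $\ell$.

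The only subtle point (and really the only thing that could go wrong) is making sure that the intersection $X \cap Y$ really does collapse to $\OPT$; this is why the disjointness of the $S_i$ is essential and why the induction works at all. Once this is checked, the conclusion $\sum_i f(S_i \cup \OPT) \geq (\ell - 1) f(\OPT)$ follows immediately by dropping the non-negative tail term $f(\OPT \cup \bigcup_i S_i) \geq 0$. No property of the greedy algorithm itself is used in this lemma; the proof rests purely on submodularity, non-negativity, and the disjointness built into the outer loop of Algorithm~\ref{alg:repeated_greedy}.
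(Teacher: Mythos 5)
Your proof is correct, and it takes a genuinely different route from the paper's. The paper argues probabilistically: it defines $g(T) = f(T \cup \OPT)$, lets $S$ be a uniformly random one of the pairwise disjoint sets $S_1,\dotsc,S_\ell$ (so each element of $\N$ lands in $S$ with probability at most $1/\ell$), and invokes a known sampling lemma (Lemma~2.2 of \cite{Buchbinder2014}) asserting $\E{g(S)} \geq (1-p)\,g(\varnothing)$ for non-negative submodular $g$. You instead prove the stronger deterministic inequality $\sum_{i=1}^{\ell} f(S_i \cup \OPT) \geq (\ell-1) f(\OPT) + f\bigl(\OPT \cup \bigcup_{i=1}^{\ell} S_i\bigr)$ by induction, using the lattice form of submodularity $f(X)+f(Y) \geq f(X\cap Y)+f(X\cup Y)$; the key step, which you correctly isolate and verify, is that $X \cap Y$ collapses to exactly $\OPT$ via the identity $(\OPT\cup B)\cap(\OPT\cup C) = \OPT\cup(B\cap C)$ together with the disjointness of the $S_i$ guaranteed by the line $\N_{i+1} \gets \N_i \setminus S_i$. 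Your version is self-contained and elementary: it needs no external lemma, avoids introducing randomness into a purely deterministic claim, and even retains an extra non-negative term before discarding it. The paper's version is shorter on the page and reuses machinery that is needed anyway in the analysis of the randomized algorithm (the same sampling claim reappears in the proof of Theorem~\ref{thm:randomized_alg}). Both arguments are sound, and indeed your induction is essentially a deterministic unrolling of the proof of the cited sampling lemma.
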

\newcommand{\sumGreedyResultsProof}{
\begin{proof}
The proof is based on the following known result.
\begin{claim}[Lemma~2.2 of \cite{Buchbinder2014}] \label{lem:buchbinder}
Let $g:2^\N \rightarrow \nnR$ be non-negative and submodular, and let $S$ a random subset of $\N$ where each element appears with probability at most $p$ (not necessarily independently). Then, $\E{g(S)} \geq (1 - p) g( \varnothing)$.
\end{claim}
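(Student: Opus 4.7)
The plan is a shifting/symmetrization argument in the space of laws of $S$. Let $\Omega$ be the compact polytope of probability distributions $P$ on $2^{\N}$ that satisfy $\sum_{A \ni e} P(A) \le p$ for every $e \in \N$. The objective $P \mapsto \mathbb{E}_P[g(S)] = \sum_A P(A)\,g(A)$ is linear, so it suffices to prove the bound at a minimizer $P^*$ of this functional over $\Omega$; such a minimizer exists by compactness.

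The driving observation is a pipage-style shift: if $A,B$ are two incomparable sets lying in the support of some $P \in \Omega$, then transferring a mass $\epsilon := \min\{P(A),P(B)\} > 0$ from each of $P(A),P(B)$ onto $P(A \cup B)$ and $P(A \cap B)$ yields a measure that still lies in $\Omega$ (the identity $\mathbf{1}[e \in A]+\mathbf{1}[e \in B]=\mathbf{1}[e \in A\cup B]+\mathbf{1}[e \in A \cap B]$ leaves every marginal unchanged) and changes the objective by $\epsilon\bigl[g(A \cup B)+g(A \cap B)-g(A)-g(B)\bigr]\le 0$ by submodularity. I now pick $P^*$ to be a minimizer that, in addition, maximizes the auxiliary potential $\Phi(P) := \sum_A |A|^2 P(A)$ over the (compact) face of minimizers. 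If the support of $P^*$ contained two incomparable sets $A,B$, the shift would output another minimizer (the objective cannot strictly drop below the minimum, so it stays constant there) whose $\Phi$ is strictly larger, since $|A \cup B|^2+|A \cap B|^2 > |A|^2+|B|^2$ whenever $A,B$ are incomparable. This contradicts the choice of $P^*$, so the support of $P^*$ must be a chain $C_0 \subsetneq C_1 \subsetneq \cdots \subsetneq C_k$.

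On such a chain the marginal constraint applied to any $e \in C_1 \setminus C_0$ gives $\Pr_{P^*}[e \in S] = 1 - P^*(\varnothing) \le p$, whence $P^*(\varnothing) \ge 1-p$ (the edge case $C_0 \ne \varnothing$ forces $\Pr_{P^*}[e \in S] = 1$ for any $e \in C_0$, which is only consistent with $p=1$, and for $p=1$ the claim is trivial from $g \ge 0$). Non-negativity of $g$ then yields
\[
\mathbb{E}_{P^*}[g(S)] \;=\; P^*(\varnothing)\,g(\varnothing) + \sum_{i \ge 1} P^*(C_i)\,g(C_i) \;\ge\; (1-p)\,g(\varnothing),
\]
and combining with the minimality of $P^*$ completes the proof. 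The main obstacle I anticipate is making the reduction to a chain-supported minimizer rigorous---naively iterating the shift need not terminate in finitely many steps---but the compactness argument above, via the auxiliary potential $\Phi$, turns this into a one-shot optimality statement rather than an open-ended iterative scheme.
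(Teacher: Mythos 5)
Your proof is correct, and it is self-contained where the paper is not: the paper does not prove this claim at all, but simply cites it as Lemma~2.2 of Buchbinder et al.\ (2014). The standard proof in that reference (going back to Feige, Mirrokni, and Vondr\'ak) is a one-line computation with the Lov\'asz extension $\hat{g}$: since $\hat{g}$ is the convex closure of $g$, one has $\E{g(S)} \geq \hat{g}(x)$ for the marginal vector $x$ with $x_e = \Pr[e \in S] \leq p$, and the threshold formula $\hat{g}(x) = \int_0^1 g(\{e : x_e \geq \theta\})\,d\theta$ gives $\hat{g}(x) \geq \int_p^1 g(\varnothing)\,d\theta = (1-p)g(\varnothing)$ after discarding the $[0,p]$ portion by non-negativity. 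Your uncrossing argument is essentially a from-scratch re-derivation of the fact underlying that formula, namely that the expectation-minimizing distribution with constrained marginals can be taken chain-supported; your use of the strictly Schur-convex potential $\Phi(P) = \sum_A |A|^2 P(A)$ to extract a chain-supported minimizer in one shot (rather than by an open-ended iteration) is clean and rigorous, the marginal-preservation and submodularity steps are right, and the edge cases ($C_0 \neq \varnothing$ forcing $p = 1$, and the trivial $p=1$ case) are handled. The trade-off is simply length versus machinery: your route avoids invoking the Lov\'asz extension and its convex-closure characterization, at the cost of reproving a special case of it.
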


Using this claim we can now prove the lemma as follows. Let $S$ be a random set which is equal to every one of the sets $\{S_i \}_{i=1}^\ell$ with probability $\frac{1}{\ell}$. Since these sets are disjoint, every element of $\N$ belongs to $S$ with probability at most $p = \frac{1}{r}$. Additionally, let us define $g:2^\N \rightarrow \nnR$ as $g(T) = f(T \cup \OPT)$ for every $T \subseteq \cN$. One can observe that $g$ is non-negative and submodular, and thus, by Claim~\ref{lem:buchbinder},
\[
	\frac{1}{\ell} \sum \limits_{i=1}^\ell f( S_i \cup \OPT)
	=
	\E{f(S \cup \OPT)}
	=
	\E{g(S)}
	\geq
	(1 - p) g(\varnothing)
	=
	\left( 1 - \frac{1}{\ell} \right) f(\OPT)
	\enspace.
\]
The lemma now follows by multiplying both sides of the last inequality by $\ell$.
\end{proof}\vspace{0pt}
}\inArXiv{\sumGreedyResultsProof}

The final lemma we need is the following basic fact about submodular functions.

\newcommand{\submodularFactLemma}{
Suppose $f$ is a non-negative submodular function over ground set $\N$. For every three sets $A, B, C \subseteq \N$, $f(A \cup (B \cap C)) + f(B \setminus C) \geq f(A \cup B)$.
}
\begin{lemma}\label{lem:submodular_fact1}
\submodularFactLemma
\end{lemma}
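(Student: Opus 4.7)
The plan is to apply the submodularity inequality~(\ref{eq:submodular_definition}) directly to a well-chosen pair of sets and then discard a non-negative term. Specifically, I would set $X \coloneqq A \cup (B \cap C)$ and $Y \coloneqq B \setminus C$, so that the left-hand side of the desired inequality is exactly $f(X) + f(Y)$.

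The main computation is then to verify that $X \cup Y = A \cup B$ and that $X \cap Y \subseteq \cN$ is some set on which we only need non-negativity. For the union, $X \cup Y = A \cup (B \cap C) \cup (B \setminus C) = A \cup B$, since $(B \cap C) \cup (B \setminus C) = B$. For the intersection, distributing gives $X \cap Y = \bigl(A \cap (B \setminus C)\bigr) \cup \bigl((B \cap C) \cap (B \setminus C)\bigr)$, and the second term is empty, so $X \cap Y = A \cap (B \setminus C)$. Submodularity therefore yields
\[
f(A \cup (B \cap C)) + f(B \setminus C) \;\geq\; f(A \cap (B \setminus C)) + f(A \cup B).
\]
Finally, using that $f$ is non-negative, $f(A \cap (B \setminus C)) \geq 0$, which can be dropped to give the claimed inequality.

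There is essentially no obstacle here: the whole proof is a single application of~(\ref{eq:submodular_definition}) plus non-negativity, and the only thing to be careful about is the bookkeeping of the intersection $X \cap Y$ (making sure the ``$B \cap C$ intersected with $B \setminus C$'' part vanishes). I would therefore keep the write-up to just a couple of lines, explicitly identifying $X$ and $Y$, computing $X \cup Y$ and $X \cap Y$, and then invoking submodularity and non-negativity in that order.
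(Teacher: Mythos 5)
Your proof is correct and is essentially identical to the paper's: both apply the submodularity inequality to the pair $X = A \cup (B \cap C)$, $Y = B \setminus C$, note that $X \cup Y = A \cup B$, and drop the term $f(X \cap Y)$ by non-negativity. The only cosmetic difference is that you simplify $X \cap Y$ to $A \cap (B \setminus C)$, whereas the paper leaves it unsimplified.
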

\newcommand{\submodularFactProof}{
\begin{proof}
Observe that
\begin{align*}
	f(A \cup (B \cap C)) + f(B \setminus C)
	\geq{} &
	f(A \cup (B \cap C) \cup (B \setminus C)) + f((A \cup (B \cap C)) \cap (B \setminus C))\\
	\geq{} &
	f(A \cup (B \cap C) \cup (B \setminus C))
	=
	f(A \cup B)
	\enspace,
\end{align*}
where the first inequality follows from the submodularity of $f$, and the second inequality follows from its non-negativity.
\end{proof}\vspace{0pt}
}\inArXiv{\submodularFactProof}

Having the above three lemmata, we are now ready to prove Theorem~\ref{thm:deterministic_alg}.

\newcommand{\calculations}{
\begin{align*}
\frac{k+ \frac{\alpha}{2}\ell + 1 - \frac{\alpha}{2}}{1-\frac{1}{\ell}}
	&\leq \frac{k+ \frac{\alpha}{2}(\sqrt{k} + 1) + 1 - \frac{\alpha}{2}}{1-\frac{1}{\sqrt{k}}}
	= \frac{k + \frac{\alpha}{2}\sqrt{k} + 1}{1-\frac{1}{\sqrt{k}}}\\
	&\leq k + \left( 1+ \frac{\alpha}{2} \right) \sqrt{k} + 2 + \frac{\alpha}{2} + \frac{4 + \alpha}{\sqrt{k}}
	\enspace,
\end{align*}
where the last inequality holds because for $k \geq 4$ it holds that
\begin{align*}
	\left[k + \left(1+\frac{\alpha}{2}\right)\sqrt{k} + 2 + \frac{\alpha}{2} + \frac{4 + \alpha}{\sqrt{k}}\right] \left( 1 - \frac{1}{\sqrt{k}} \right)
	={} &
	k + \frac{\alpha}{2}\sqrt{k} + 1 + \frac{4 + \alpha}{2\sqrt{k}} - \frac{4 + \alpha}{k}\\
	\geq{} &
	k + \frac{\alpha}{2}\sqrt{k} + 1
	\enspace. \inArXiv{\qedhere}
\end{align*}
}
\begin{proof}[Proof of Theorem~\ref{thm:deterministic_alg}]
Observe that, for every $1 \leq i \leq \ell$, we have
\begin{equation} \label{eq:set_equalities}
\OPT \setminus \N_i = \OPT \cap ( \N \setminus \N_i) = \OPT \cap \left( \cup_{j=1}^{i-1} S_i \right) = \cup_{j=1}^{i-1} \left( \OPT \cap S_j \right)
\end{equation}
where the first equality holds because $\OPT \subseteq \N$ and the second equality follows from the removal of $S_i$ from the ground set in each iteration of Algorithm~\ref{alg:repeated_greedy}.
Using the previous lemmata and this observation, we get
{\allowdisplaybreaks
\begin{align*}
(\ell-1) f(\OPT) & \leq \sum \limits_{i=1}^\ell f(S_i \cup \OPT) &\text{(Lemma~\ref{lem:sum_greedy_results})} \\
	&\leq \sum \limits_{i=1}^\ell f(S_i \cup (\OPT \cap \N_i)) + \sum \limits_{i=1}^\ell f(\OPT \setminus \N_i) &\text{(Lemma~\ref{lem:submodular_fact1})} \\
	&= \sum \limits_{i=1}^\ell f(S_i \cup (\OPT \cap \N_i)) + \sum \limits_{i=1}^\ell f \left(\cup_{j=1}^{i-1}(\OPT \cap S_j) \right) &\text{(Equality~\eqref{eq:set_equalities})} \\
	&\leq \sum \limits_{i=1}^\ell f(S_i \cup (\OPT \cap \N_i)) + \sum \limits_{i=1}^\ell \sum \limits_{j=1}^{i-1} f(\OPT \cap S_j) &\text{(submodularity)} \\
	&\leq (k+1) \sum \limits_{i=1}^\ell f(S_i) + \alpha \sum \limits_{i=1}^\ell \sum \limits_{j=1}^{i-1} f(S_j') &\text{(Lemma~\ref{lem:known_approx_results})} \\
	&\leq (k+1) \sum \limits_{i=1}^\ell f(T) + \alpha \sum \limits_{i=1}^\ell \sum \limits_{j=1}^{i-1} f(T) &\text{($T$'s definition)} \\
	&= \left[ (k+1)\ell + \alpha \ell (\ell-1) /2 \right] f(T)
	\enspace.
\end{align*}
}
Dividing the last inequality by $(k+1)\ell + \alpha \ell (\ell-1) /2$, we get
\begin{equation} \label{eq:approximation_ratio}
f(T) \geq \frac{\ell-1}{(k+1)\ell + \frac{\alpha}{2}\ell(\ell-1)} f(\OPT) = \frac{1-\frac{1}{\ell}}{ k+ \frac{\alpha}{2}\ell + 1 - \frac{\alpha}{2} } f(\OPT) \enspace.
\end{equation}

The last inequality shows that the approximation ratio of Algorithm~\ref{alg:repeated_greedy} is at most $\frac{ k+ \frac{\alpha}{2}\ell + 1 - \frac{\alpha}{2} }{1-\frac{1}{\ell}}$. To prove the theorem it remains to show that, for an appropriate choice of $\ell$, this ratio is at most $k + \left( 1+ \frac{\alpha}{2} \right) \sqrt{k} + 2 + \frac{\alpha}{2} + O(1/\sqrt{k})$. It turns out that the right value of $\ell$ for us is $\ceil{\sqrt{k}}$. \inConference{The theorem follows by plugging this value into Inequality~\eqref{eq:approximation_ratio}. See Appendix~\ref{app:missing_proofs_deterministic} for the exact calculations.}\inArXiv{Plugging this value into Inequality~\eqref{eq:approximation_ratio} we get that the approximation ratio of Algorithm~\ref{alg:repeated_greedy} is at most}
\inArXiv{\calculations}
\end{proof}\vspace{0pt}

\section{Sample Greedy: An Efficient Randomized Algorithm} \label{sec:randomized_alg}
In this section, we present and analyze a randomized algorithm for maximizing a submodular function $f$ subject to a $k$-extendible system constraint. Our algorithm is very simple: it first samples elements from $\N$, and then runs the greedy algorithm on the sampled set. This algorithm is outlined as Algorithm~\ref{alg:sample_greedy}.

\SetKwIF{With}{OtherwiseWith}{Otherwise}{with}{do}{otherwise with}{otherwise}{}
\noindent \begin{minipage}[c]{0.45\textwidth}
  \begin{algorithm}[H]
	\caption{\newline\hspace*{1cm}Sample Greedy($\cN, f, \mathcal{I}, k$)} \label{alg:sample_greedy}
	\DontPrintSemicolon
	Let $\N' \leftarrow \varnothing$ and $S \leftarrow \varnothing$.\\
	\For{$u \in \N$}{
		\With{probability $(k+1)^{-1}$\label{line:sample}}{Add $u$ to $\N'$.}
	}
	\While{there exists $\ u \in \N'$ such that $S + u \in \mathcal{I}$ and $\Delta f(u | S) > 0$}{
		Let $u \in \N'$ be the element of this kind maximizing $\Delta f(u | S)$.\\
		Add $u$ to $S$.\\
	}
	\Return{$S$}.\\
	\end{algorithm}
\end{minipage}
\hspace{0.01\textwidth}
\begin{minipage}[c]{0.53\textwidth}
  \begin{algorithm}[H]
	\caption{\newline\hspace*{1cm}Equivalent Algorithm($\cN, f, \mathcal{I}, k$)} \label{alg:equiv_sample_greedy}
	\DontPrintSemicolon
	Let $\cN' \gets \cN$, $S \gets \varnothing$ and $O \gets OPT$.\\
	\While{there exists an element $u \in \cN'$ such that $S + u \in \cI$ and $\Delta f(u | S) > 0$}
	{
		Let $u \in \cN'$ be the element of this kind maximizing $\Delta f(u | S)$, and let $S_u \gets S$.\\
		\With{probability $(k + 1)^{-1}$\label{line:sample_equiv}}
		{
			Add $u$ to $S$ and $O$.\\
			Let $O_u \subseteq O \setminus S$ be the smallest set such that $O \setminus O_u \in \cI$.\\
		}
		\Otherwise
		{
			\lIf{$u \in O$}{Let $O_u \gets \{u\}$.}
			\lElse{Let $O_u \gets \varnothing$.}
		}
		Remove the elements of $O_u$ from $O$.\\
		Remove $u$ from $\cN'$.\\
	}
	\Return{$S$.}
\end{algorithm}
\end{minipage}


To better analyze Algorithm~\ref{alg:sample_greedy}, we introduce an auxiliary algorithm given as Algorithm~\ref{alg:equiv_sample_greedy}. It is not difficult to see that both algorithms have identical output distributions. The sampling of elements in Algorithm~\ref{alg:sample_greedy} is independent of the greedy maximization, so interchanging these two steps does not affect the output distribution. Moreover, the variables $S_u$, $O$ and $O_u$ in Algorithm~\ref{alg:equiv_sample_greedy} do not affect the output $S$ (and in fact, appear only for analysis purposes). Thus, the two algorithms are equivalent, and any approximation guarantee we can prove for Algorithm~\ref{alg:equiv_sample_greedy} immediately carries over to Algorithm~\ref{alg:sample_greedy}.


Next, we are going to analyze Algorithm~\ref{alg:equiv_sample_greedy}. However, before doing it, let us intuitively explain the roles of the sets $S$, $S_u$, $O$ and $O_u$ in this algorithm. We say that Algorithm~\ref{alg:equiv_sample_greedy} considers an element $u$ in some iteration if $u$ is the element chosen as maximizing $\Delta f(u|S)$ at the beginning of this iteration. Note that an element is considered at most once, and perhaps not at all. As in Algorithm~\ref{alg:sample_greedy}, $S$ is the current solution.  Likewise, $S_u$ is the current solution $S$ at the beginning of the iteration in which Algorithm~\ref{alg:equiv_sample_greedy} considers $u$. The set $O$ maintained in Algorithm~\ref{alg:equiv_sample_greedy} is an independent set which starts as $\OPT$ and changes over time, while preserving three properties: 
\begin{enumerate}[label=\textbf{P\arabic*}]
	\item $O$ is an independent set. \label{itm:O_ind}
    \item Every element of $S$ is an element of $O$. \label{itm:S_sub_O}
    \item Every element of $O \setminus S$ is an element not yet considered by Algorithm~\ref{alg:equiv_sample_greedy}. \label{itm:not_considered}
  \end{enumerate}
Because these properties need to be maintained throughout the excecution, some elements may be removed from $O$ in every given iteration. The set $O_u$ is simply the set of elements removed from $O$ in the iteration in which $u$ is considered. Note that $O_u$ and $S_u$ are random sets, and Algorithm~\ref{alg:equiv_sample_greedy} defines values for them if it considers $u$ at some point. In the analysis below we assume that both $S_u$ and $O_u$ are empty if $u$ is not considered by Algorithm~\ref{alg:equiv_sample_greedy} (which means that Algorithm~\ref{alg:equiv_sample_greedy} does not explicitly set values for them).

We now explain how the algorithm manages to maintain the above mentioned properties of $O$. Let us begin with properties \ref{itm:O_ind} and \ref{itm:S_sub_O}. Clearly the removal of elements from $O$ that do not belong to $S$ cannot violate these properties, thus, we only need to consider the case that the algorithm adds an element $u$ to $S$ in some iteration. To maintain \ref{itm:S_sub_O}, $u$ is also added to $O$. On the one hand, $O + u$ might not be independent, and thus, the addition of $u$ to $O$ might violate \ref{itm:O_ind}. However, since $O$ is an extension of $S$ by \ref{itm:S_sub_O}, $S + u$ is independent by the choice of $u$ and $(\N, \mathcal{I})$ is a $k$-extendible system, the algorithm is able to choose a set $O_u \subseteq O \setminus S$ of size at most $k$ whose removal restores the independence of $O + u$, and thus, also \ref{itm:O_ind}. It remains to see why the algorithm preserves also \ref{itm:not_considered}. Since the algorithm never removes from $O$ an element of $S$, a violation of \ref{itm:not_considered} can only occur when the algorithm considers some element of $O \setminus S$. However, following this consideration one of two things must happen. Either $u$ is also added to $S$, or $u$ is placed in $O_u$ and removed from $O$. In either case \ref{itm:not_considered} is restored.

From this point on, every expression involving $S$ or $O$ is assumed to refer to the final values of these sets. The following lemma provides a lower bound on $f(S)$ that holds deterministically. Intuitively, this lemma follows from the observation that, when an element $u$ is considered by Algorithm~\ref{alg:equiv_sample_greedy}, its marginal contribution is at least as large as the marginal contribution of any element of $\OPT \setminus S$. \inConference{ Due to space constraints, many proofs in this section have been deferred to Appendix~\ref{app:missing_proofs_randomized}.}

\newcommand{\lowerBoundSLemma}{
$f(S) \geq f(S \cup \OPT) - \sum \limits_{u \in \N} |O_u \setminus S | \Delta f(u | S_u)$.
}
\begin{lemma} \label{lem:lower_bound_S}
\lowerBoundSLemma
\end{lemma}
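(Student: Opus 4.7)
The plan is to bound $f(S \cup \OPT) - f(S)$ via a charging argument that maps each element of $\OPT \setminus S$ either to zero (when its contribution is non-positive) or to a considered element $u$, with the total charge to each $u$ controlled by $|O_u \setminus S|$. The starting point is the standard submodularity inequality
\[
f(S \cup \OPT) - f(S) \leq \sum_{v \in \OPT \setminus S} \Delta f(v | S),
\]
after which I would partition the sum according to whether $v$ lies in the final set $O$ or was removed at some prior iteration.

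For the first case, $v \in (\OPT \setminus S) \cap O$, property \ref{itm:not_considered} tells us that $v$ was never considered by the algorithm and hence remained in $\cN'$ at termination. The loop then terminated despite $v$ being in $\cN'$, which forces either $S + v \notin \cI$ or $\Delta f(v | S) \leq 0$; but $S \cup \{v\} \subseteq O$ by \ref{itm:S_sub_O} and $O$ is independent by \ref{itm:O_ind}, so $S + v \in \cI$ and thus $\Delta f(v | S) \leq 0$. Elements in this case contribute at most zero to the sum.

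For the second case, each $v \in \OPT \setminus S$ with $v \notin O$ is removed at a unique iteration, say $u(v)$, so $v \in O_{u(v)} \setminus S$. I would show $\Delta f(v | S) \leq \Delta f(u(v) | S_{u(v)})$ in two sub-steps: submodularity applied with $S_{u(v)} \subseteq S$ gives $\Delta f(v | S) \leq \Delta f(v | S_{u(v)})$, and if this quantity is non-positive the inequality follows trivially because $\Delta f(u(v) | S_{u(v)}) > 0$ by the loop condition; otherwise $v$ was an eligible greedy candidate at iteration $u(v)$ (it was still in $\cN'$ by \ref{itm:not_considered}, and $S_{u(v)} \cup \{v\} \subseteq O$ was independent by \ref{itm:O_ind}), so the greedy selection of $u(v)$ yields $\Delta f(v | S_{u(v)}) \leq \Delta f(u(v) | S_{u(v)})$.

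Since each such $v$ lies in $O_{u(v)} \setminus S$, the multiplicity of charges to any fixed $u$ is at most $|O_u \setminus S|$, and summing yields
\[
f(S \cup \OPT) - f(S) \leq \sum_{u \in \N} |O_u \setminus S| \cdot \Delta f(u | S_u),
\]
which rearranges to the lemma. The main subtlety lies in Case A, where all three invariants \ref{itm:O_ind}--\ref{itm:not_considered} must be combined with the termination condition to rule out any positive residual contribution from $\OPT$; the charging in Case B is more mechanical, modulo the minor care needed to treat the sub-case where $v$ itself is not a valid greedy candidate at iteration $u(v)$.
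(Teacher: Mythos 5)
Your proposal is correct and uses essentially the same charging argument as the paper: elements of $\OPT$ that survive in the final $O$ contribute non-positively by the termination condition together with the invariants of $O$, while each removed element is charged to $\Delta f(u \mid S_u)$ for the unique considered element $u$ whose iteration discarded it, with multiplicity at most $|O_u \setminus S|$ per $u$. The only cosmetic difference is that the paper routes the bound through the intermediate quantity $f(O)$ (first proving $f(S) \geq f(O)$ and then lower-bounding $f(O)$ by telescoping over the sets $O_u \setminus S$), whereas you bound $f(S \cup \OPT) - f(S)$ directly by $\sum_{v \in \OPT \setminus S} \Delta f(v \mid S)$ and partition that sum according to whether $v$ remains in $O$.
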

\newcommand{\lowerBoundSProof}{
\begin{proof}
We first show that $f(S) \geq f(O)$, then we lower bound $f(O)$ to complete the proof. By \ref{itm:O_ind} and \ref{itm:S_sub_O}, we have $O \in \mathcal{I}$ and $S \subseteq O$, and thus, $S + v \in \mathcal{I}$ for all $v \in O \setminus S$ because $(\N, \mathcal{I})$ is an independence system. Consequently, the termination condition of Algorithm~\ref{alg:equiv_sample_greedy} guarantees that $\Delta f(v | S) \leq 0$ for all $v \in O \setminus S$. To use these observations, let us denote the elements of $O \setminus S$ by $v_1, v_2, \dotsc, v_{|O \setminus S|}$ in an arbitrary order. Then
\[
f(O) = f(S) + \sum \limits_{i=1}^{ |O \setminus S|} \Delta f \left( v_i | S \cup \{v_1, \dots , v_{|O \setminus S|} \} \right) \leq f(S) + \sum \limits_{i=1}^{ |O \setminus S|} \Delta f \left( v_i | S \right) \leq f(S)\enspace,
\]
where the first inequality follows by the submodularity of $f$.

It remains to prove the lower bound on $f(O)$. By definition, $O$ is the set obtained from $\OPT$ after the elements of $\cup_{u \in \N} O_u$ are removed and the elements of $S$ are added. Additionally, an element that is removed from $O$ is never added to $O$ again, unless it becomes a part of $S$. This implies that the sets $\left\{ O_u \setminus S \right\}_{u \in \N}$ are disjoint and that $O$ can also be written as
\begin{equation} \label{eq:rewriting_O}
O = \left( S \cup \OPT \right) \setminus \cup_{u \in \N} \left(O_u \setminus S \right)
\enspace.
\end{equation}
Denoting the the elements of $\N$ by $u_1,u_2, \dotsc, u_n$ in an arbitrary order, and using the above, we get
\begin{align*}
f(O)
	&= f(S \cup \OPT) - \sum \limits_{i=1}^n \Delta f \left( O_{u_i} \setminus S | (S \cup \OPT) \setminus \cup_{1 \leq j < i} ( O_{u_i} \setminus S) \right) &\text{(Equality~\eqref{eq:rewriting_O})} \\
	&\geq f(S \cup \OPT) - \sum \limits_{i=1}^n \Delta f(O_{u_i} \setminus S | S_{u_i}) \\
	&\geq f(S \cup \OPT) - \sum \limits_{i=1}^n \sum \limits_{v \in O_{u_i} \setminus S} \Delta f(v | S_{u_i}) \\
	&= f(S \cup \OPT) - \sum \limits_{u \in \cN} \sum \limits_{v \in O_u \setminus S} \Delta f(v | S_{u_i})
	\enspace,
\end{align*}
where the first inequality follows from the submodularity of $f$ because $S_{u_i} \subseteq S \subseteq (S \cup \OPT) \setminus \cup_{u \in \N} \left(O_u \setminus S \right)$ and the second inequality follows from the submodularity of $f$ as well.

To complete the proof of the lemma we need one more observation. Consider an element $u$ for which $O_u$ is not empty. Since $O_u$ is not empty, we know that $u$ was considered by the algorithm at some iteration. Moreover, every element of $O_u$ was also a possible candidate for consideration at this iteration, and thus, it must be the case that $u$ was selected for consideration because its marginal contribution with respect to $S_u$ is at least as large as the marginal contribution of every element of $O_u$. Plugging this observation into the last inequality, we get the following desired lower bound on $f(O)$.
\begin{align*}
	f(O)
	\geq{} &
	f(S \cup \OPT) - \sum \limits_{u \in \cN} \sum \limits_{v \in O_u \setminus S} \Delta f(v | S_{u_i}) \inConference{\\}
	\geq\inConference{{} &}
	f(S \cup \OPT) - \sum \limits_{u \in \cN} \sum \limits_{v \in O_u \setminus S} \Delta f(u | S_{u_i}) \inArXiv{\\}
	=\inArXiv{{} &}
	f(S \cup \OPT) - \sum \limits_{u \in \cN} |O_u \setminus S| \Delta f(u | S_{u_i})
	\enspace. \inArXiv{\qedhere}
\end{align*}
\end{proof}\vspace{0pt}
}\inArXiv{\lowerBoundSProof}

While the previous lemma was true deterministically, the next two lemmas are statements about expected values. At this point, it is convenient to define a new random variable. For every element $u \in \N$, let $X_u$ be an indicator for the event that $u$ is considered by Algorithm~\ref{alg:equiv_sample_greedy} in one of its iterations. The next lemma gives an expression for the expected value of $S$.\inConference{ It follows quite easily from the linearity of expectation.}

\newcommand{\expSLemma}{
$\E{f(S)} \geq \frac{1}{k+1} \sum \limits_{u \in \N} \E{X_u \Delta f(u | S_u)}$.
}
\begin{lemma} \label{lem:exp_S}
\expSLemma
\end{lemma}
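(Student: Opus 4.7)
The plan is to express $f(S)$ as a telescoping sum of marginal gains over the elements actually added to $S$, rewrite that sum using indicator variables over all of $\cN$, and then use the independence of the per-iteration coin flip from the history leading up to that iteration to extract the factor $1/(k+1)$.

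\smallskip
\noindent\textbf{Step 1: telescoping.} Let $u_1, u_2, \dotsc, u_m$ denote the elements added to $S$ in the order they are added by Algorithm~\ref{alg:equiv_sample_greedy}. By construction, $S_{u_i} = \{u_1,\dotsc,u_{i-1}\}$, i.e.\ $S_{u_i}$ is precisely the value of $S$ just before $u_i$ is inserted. Telescoping yields
\[
f(S) \;=\; f(\varnothing) + \sum_{i=1}^m \Delta f(u_i \mid S_{u_i}) \;\geq\; \sum_{u \in S} \Delta f(u \mid S_u),
\]
where the inequality uses the non-negativity of $f$.

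\smallskip
\noindent\textbf{Step 2: indicator rewriting.} Introduce, for each $u \in \cN$, an indicator $Y_u$ for the event ``$u$ is added to $S$''. Adopting the convention that $\Delta f(u \mid S_u) \cdot 0 = 0$ (so the value of $S_u$ is irrelevant when $u$ is not considered), Step~1 gives
\[
f(S) \;\geq\; \sum_{u \in \cN} Y_u \, \Delta f(u \mid S_u).
\]
Note that $Y_u = 1$ requires both that $u$ is considered ($X_u = 1$) and that the Bernoulli$\bigl(1/(k+1)\bigr)$ coin flipped on line~\ref{line:sample_equiv} of that iteration comes up heads; call this coin $Z_u$, so $Y_u = X_u Z_u$.

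\smallskip
\noindent\textbf{Step 3: conditional independence of $Z_u$.} Let $\cF_u$ denote the $\sigma$-algebra generated by all randomness used by Algorithm~\ref{alg:equiv_sample_greedy} strictly before the sampling coin $Z_u$ is flipped (if $u$ is never considered, take $\cF_u$ to be the terminal $\sigma$-algebra). The events ``$u$ is considered'' and the set $S_u$ are both $\cF_u$-measurable, so $X_u \Delta f(u\mid S_u)$ is $\cF_u$-measurable. On the event $\{X_u = 1\}$ the coin $Z_u$ is drawn independently of $\cF_u$ and has mean $1/(k+1)$; on the event $\{X_u = 0\}$ the product $X_u Z_u \Delta f(u\mid S_u)$ vanishes. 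Hence
\[
\mathbb{E}\bigl[\,Y_u \Delta f(u\mid S_u) \,\big|\, \cF_u \bigr]
\;=\; X_u \Delta f(u\mid S_u) \cdot \tfrac{1}{k+1}.
\]
Taking an outer expectation gives $\mathbb{E}[Y_u \Delta f(u\mid S_u)] = \tfrac{1}{k+1}\, \mathbb{E}[X_u \Delta f(u\mid S_u)]$.

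\smallskip
\noindent\textbf{Step 4: conclude.} Taking expectation in Step~2 and applying Step~3 termwise yields
\[
\mathbb{E}[f(S)] \;\geq\; \sum_{u \in \cN} \mathbb{E}[Y_u \Delta f(u\mid S_u)] \;=\; \frac{1}{k+1} \sum_{u \in \cN} \mathbb{E}[X_u \Delta f(u\mid S_u)],
\]
which is the desired bound.

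\smallskip
The only real obstacle is Step~3: one has to be careful that $S_u$ (and the event that $u$ is considered at all) is determined by the history \emph{before} the coin $Z_u$ is flipped, so that $Z_u$ is genuinely independent of $X_u \Delta f(u\mid S_u)$. Writing the algorithm's execution against an explicit filtration in which the consideration of an element precedes its sampling coin makes this rigorous; everything else is bookkeeping.
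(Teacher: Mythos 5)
Your proof is correct and follows essentially the same route as the paper: you decompose $f(S)$ into the per-element gains of added elements (the paper's $G_u$ equals your $Y_u\,\Delta f(u\mid S_u)$) and then condition on the history up to the moment $u$ is considered to extract the factor $\frac{1}{k+1}$, which is exactly the paper's conditioning on the events $\mathcal{E}_u$, only phrased in filtration language.
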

\newcommand{\expSProof}{
\begin{proof}
For each $u \in \N$, let $G_u$ be a random variable whose value is equal to in the increase in the value of $S$ when $u$ is added to $S$ by Algorithm~\ref{alg:equiv_sample_greedy}. If $u$ is never added to $S$ by Algorithm~\ref{alg:equiv_sample_greedy}, then the value of $G_u$ is simply $0$. Clearly,
\[f(S) = f(\varnothing) + \sum \limits_{u \in \N} G_u \geq \sum \limits_{u \in \N} G_u \enspace.\]
By the linearity of expectation, it only remains to show that
\begin{equation} \label{eq:exp_G}
\E{G_u} = \frac{1}{k+1} \E{X_u \Delta f(u | S_u)} \enspace.
\end{equation}

Let $\mathcal{E}_u$ be an arbitrary event specifying all random decisions made by Algorithm~\ref{alg:equiv_sample_greedy} up until the iteration in which it considers $u$ if $u$ is considered, or all random decisions made by Algorithm~\ref{alg:equiv_sample_greedy} throughout its execution if $u$ is never considered. By the law of total probability, since these events are disjoint, it is enough to prove that Equality~(\ref{eq:exp_G}) holds when conditioned on every such event $\mathcal{E}_u$. If $\mathcal{E}_u$ is an event that implies that Algorithm~\ref{alg:equiv_sample_greedy} does not consider $u$, then, by conditioning on $\mathcal{E}_u$, we obtain
\[\E{G_u | \mathcal{E}_u}  = 0 = \frac{1}{k+1} \E{0 \cdot \Delta f(u|S_u) | \mathcal{E}_u} = \frac{1}{k+1} \E{X_u \Delta f(u | S_u) | \mathcal{E}_u} \enspace.\]
On the other hand, if $\mathcal{E}_u$ implies that Algorithm~\ref{alg:equiv_sample_greedy} does consider $u$, then we observe that $S_u$ is a deterministic set given $\cE_u$. Denoting this set by $S'_u$, we obtain
\[\E{G_u | \mathcal{E}_u} = \Pr\left[ u \in S \mid \cE_u\right] \Delta f(u | S'_u) = \frac{1}{k+1} \Delta f(u | S'_u) = \frac{1}{k+1} \E{X_u \Delta f(u | S_u) | \mathcal{E}_u} \enspace.\]
where the second equality hold since an element considered by Algorithm~\ref{alg:equiv_sample_greedy} is added to $S$ with probability $(k+1)^{-1}$.
\end{proof}\vspace{0pt}
}\inArXiv{\expSProof}

The next lemma relates terms appearing in the last two lemmata. Intuitively, this lemma shows that $O_u$ is on average a small set.\inConference{ For elements of $\OPT$ this is true since their $O_u$ set never contains more than one element, and for other elements this is true since their $O_u$ set is empty whenever they are not added to $S$.}

\newcommand{\linkLemma}[1]{
For every element $u \in \N$,
\begin{\ifx&#1&equation\else math\fi} \ifx&#1&\label{eq:linking_inequality}\fi
\E{ |O_u \setminus S | \Delta f(u | S_u) } \leq \frac{k}{k+1} \E{X_u \Delta f(u | S_u)}\enspace.
\end{\ifx&#1&equation\else math\fi}
}
\begin{lemma} \label{lem:link}
\inConference{\linkLemma{*}}\inArXiv{\linkLemma{}}
\end{lemma}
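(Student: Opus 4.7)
The plan is to establish the inequality pointwise after conditioning on a suitable history of Algorithm~\ref{alg:equiv_sample_greedy}. I would let $\cE_u$ be an event fixing every random decision of the algorithm up through the moment just before $u$'s sampling coin flip on Line~\ref{line:sample_equiv}, when $u$ is considered, and fixing the entire execution when $u$ is not considered. Under this conditioning the quantities $X_u$, $S_u$, and the indicator of ``$u$ lies in $O$ at the start of $u$'s iteration'' are all measurable, so by the tower property it is enough to verify
\[
\E{|O_u \setminus S|\,\Delta f(u \mid S_u) \,\bigm|\, \cE_u} \;\le\; \tfrac{k}{k+1}\,\E{X_u\,\Delta f(u \mid S_u) \,\bigm|\, \cE_u}
\]
for every realization of $\cE_u$. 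When $\cE_u$ implies $X_u = 0$ the inequality is trivial by the convention $O_u = S_u = \varnothing$, so from now on assume $X_u = 1$ and write $O^{\mathrm{pre}}$ for the value of $O$ at the start of $u$'s iteration.

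The key step is a case split on whether $u \in O^{\mathrm{pre}}$. If $u \notin O^{\mathrm{pre}}$, then sampling (probability $1/(k+1)$) adds $u$ to $O$, and $k$-extendibility applied to the independent set $O^{\mathrm{pre}}$, its subset $S_u$, and the element $u$ (which satisfies $S_u + u \in \cI$ by the algorithm's selection rule) guarantees the existence of a set of size at most $k$ in $O^{\mathrm{pre}} \setminus S_u$ whose removal restores independence, so $|O_u| \le k$; not sampling sets $O_u = \varnothing$. Hence $\E{|O_u \setminus S| \mid \cE_u} \le \tfrac{1}{k+1}\cdot k = \tfrac{k}{k+1}$. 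If instead $u \in O^{\mathrm{pre}}$, then sampling leaves $O$ unchanged (``add $u$ to $O$'' is a no-op) and $O^{\mathrm{pre}}$ is already independent by property~\ref{itm:O_ind}, forcing $O_u = \varnothing$; not sampling sets $O_u = \{u\}$, and because $u$ is removed from $\cN'$ at the end of the iteration and hence never considered again, $u$ cannot belong to the final $S$, giving $|O_u \setminus S| = 1$. So here $\E{|O_u \setminus S| \mid \cE_u} \le \tfrac{k}{k+1}\cdot 1 = \tfrac{k}{k+1}$ as well.

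In either sub-case the conditional expectation of $|O_u \setminus S|$ is at most $\tfrac{k}{k+1}$. Multiplying through by the $\cE_u$-measurable and non-negative factor $\Delta f(u \mid S_u)$ (non-negative by the positive-marginal rule of the algorithm), and noting that $\E{X_u\,\Delta f(u \mid S_u) \mid \cE_u} = \Delta f(u \mid S_u)$ since $X_u = 1$ under $\cE_u$, yields the desired conditional inequality; averaging over $\cE_u$ via the tower property then finishes the proof.

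The main obstacle I expect is recognizing that the case split on $u \in O^{\mathrm{pre}}$ is essential. A naive bound of ``$|O_u \setminus S| \le k$ if sampled, $\le 1$ if not'' produces only $\tfrac{2k}{k+1}$, which is too weak by a factor of two and would spoil the $(k+1)^2/k$ approximation ratio. The sharp constant $\tfrac{k}{k+1}$ appears precisely because the costly outcome differs between the two cases — ``sample'' when $u \notin O^{\mathrm{pre}}$ and ``do not sample'' when $u \in O^{\mathrm{pre}}$ — and these outcomes are cleanly separated by the $\cE_u$-conditioning.
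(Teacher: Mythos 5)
Your proof is correct and follows essentially the same route as the paper's: conditioning on the history $\cE_u$ up to the iteration in which $u$ is considered, reducing to the bound $\E{|O_u \setminus S| \mid \cE_u} \leq \frac{k}{k+1}$, and splitting on whether $u$ already lies in $O$ at that moment, with the two cases contributing their mass on complementary outcomes of the coin flip. The closing remark about why the naive union of the two worst cases would only give $\frac{2k}{k+1}$ is a fair observation but not needed for the argument.
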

\newcommand{\linkProof}{
\begin{proof}
As in the proof of Lemma~\ref{lem:exp_S}, let $\mathcal{E}_u$ be an arbitrary event specifying all random decisions made by Algorithm~\ref{alg:equiv_sample_greedy} up until the iteration in which it considers $u$ if $u$ is considered, or all random decisions made by Algorithm~\ref{alg:equiv_sample_greedy} throughout its execution if it never considers $u$. By the law of total probability, since these events are disjoint, it is enough to prove Inequality~(\ref{eq:linking_inequality}) conditioned on every such event $\mathcal{E}_u$. If $\mathcal{E}_u$ implies that $u$ is not considered, then both $|O_u|$ and $X_u$ are $0$ conditioned on $\mathcal{E}_u$, and thus, the inequality holds as an equality. Thus, we may assume in the rest of the proof that $\mathcal{E}_u$ implies that $u$ is considered by Algorithm~\ref{alg:equiv_sample_greedy}. Notice that conditioned on $\cE_u$ the set $S_u$ is deterministic and $X_u$ takes the value $1$. Denoting the deterministic value of $S_u$ conditioned on $\cE_u$ by $S'_u$, Inequality~(\ref{eq:linking_inequality}) reduces to 
\[\E{|O_u \setminus S | \mid \mathcal{E}_u} \Delta f(u | S'_u) \leq \frac{k}{k+1} \Delta f(u | S'_u) \enspace. \]
Since $u$ is being considered, it must hold that $\Delta f(u |S'_u) > 0$, and thus, the above inequality is equivalent to $\E{|O_u \setminus S | \mid \mathcal{E}_u} \leq \frac{k}{k+1} $. There are now two cases to consider. If $\mathcal{E}_u$ implies that $u \in O$ at the beginning of the iteration in which Algorithm~\ref{alg:equiv_sample_greedy} considers $u$, then $O_u$ is empty if $u$ is added to $S$ and is $\{u\}$ if $u$ is not added to $S$. As $u$ is added to $S$ with probability $\frac{1}{k+1}$, this gives
\[\E{|O_u \setminus S | \mid \mathcal{E}_u} \leq \frac{1}{k+1} \cdot | \varnothing | + \left( 1- \frac{1}{k+1}\right) |\{u\}| = \frac{k}{k+1} \enspace, \]
and we are done. Consider now the case that $\mathcal{E}_u$ implies that $u \not \in O$ at the beginning of the iteration in which Algorithm~\ref{alg:equiv_sample_greedy} considers $u$. In this case, $O_u$ is always of size at most $k$ by the discussion following Algorithm~\ref{alg:equiv_sample_greedy}, and it is empty when $u$ is not added to $S$. As $u$ is, again, added to $S$ with probability $\frac{1}{k+1}$, we get in this case
\[ \E{|O_u \setminus S | \mid \mathcal{E}_u} \leq \frac{1}{k+1} \cdot k + \left( 1- \frac{1}{k+1}\right) |\varnothing| = \frac{k}{k+1} \enspace. \inArXiv{\qedhere} \]
\end{proof}\vspace{0pt}
}\inArXiv{\linkProof}

We are now ready to prove Theorem~\ref{thm:randomized_alg}.

\newcommand{\knownResultsProof}{
let us define $g:2^\N \rightarrow \nnR$ as $g(T) = f(T \cup \OPT)$ for every $T \subseteq \cN$. One can observe that $g$ is non-negative and submodular. Since $S$ contains every element with probability at most $(k + 1)^{-1}$, we get by Claim~\ref{lem:buchbinder}
\[
	\E{f(S \cup \OPT)}
	=
	\E{g(S)}
	\geq
	\left(1 - \frac{1}{k + 1}\right) g(\varnothing)
	=
	\frac{k}{k +1} f(\OPT)
	\enspace.
\]
}
\begin{proof}[Proof of Theorem~\ref{thm:randomized_alg}]
We prove here that Algorithm~\ref{alg:sample_greedy} achieves the approximation ratios guaranteed by Theorem~\ref{thm:randomized_alg} for submodular and monotone submodular objectives. The approximation ratio guaranteed by Theorem~\ref{thm:randomized_alg} for linear objectives is obtained, using similar ideas, by a close variant of Algorithm~\ref{alg:sample_greedy}; and we defer a more detailed discussion of this variant and its guarantee to Appendix~\ref{app:linear_objectives_algorithm}.

As discussed earlier, Algorithms~\ref{alg:sample_greedy} and~\ref{alg:equiv_sample_greedy} have identical output distributions, and so it suffices to show that Algorithm~\ref{alg:equiv_sample_greedy} achieves the desired approximation ratios. Note that
\begin{align*}
\E{f(S)} &\geq \E{f(S \cup \OPT)} - \sum \limits_{u \in \N} \E{|O_u \setminus S| \Delta f(u | S_u)} &\text{(Lemma~\ref{lem:lower_bound_S})} \\
&\geq \E{f(S \cup \OPT)} - \frac{k}{k+1} \sum \limits_{u \in \N} \E{X_u \Delta f(u | S_u)} &\text{(Lemma~\ref{lem:link})} \\
&\geq \E{f(S \cup \OPT)} - k \E{f(S)}\enspace. &\text{(Lemma~\ref{lem:exp_S})}
\end{align*}

If $f$ is monotone, then the proof completes by rearranging the above inequality and observing that $f(S \cup OPT) \geq f(OPT)$. Otherwise, \inArXiv{\knownResultsProof}\inConference{the fact that $S$ contains every element with probability at most $(k + 1)^{-1}$ can be used to show via known results (see Appendix~\ref{app:missing_proofs_randomized} for details) that
\[
	\E{f(S \cup \OPT)}
	\geq
	\frac{k}{k +1} f(\OPT)
	\enspace.
\]}
The proof now completes by combining the two above inequalities, and rearranging.
\end{proof}\vspace{0pt}

\section{Experimental Results} \label{sec:experimental_results}

In this section, we present results of an experiment on real dataset comparing the performance of \algdt and \algrd with other competitive algorithms. In particular, we compare our algorithms to the greedy algorithm and FANTOM, an $O(krn)$-time algorithm for nonmonotone submodular optimization introduced in \citep{Mirzasoleiman2016}. We also boost \algrd by taking the best of four runs, denoted Max Sample Greedy. We test these algorithms on a personalized movie recommendation system, and find that while  \algdt and \algrd return comparable solutions to FANTOM, they run orders of magnitude faster. These initial results indicate that \algrd may be applied (without any loss in performance) to massive problem instances that were previously intractable.

In the movie recommendation system application, we observe movie ratings from users, and our objective is to recommend movies to users based on their reported favorite genres. In particular, given a user-specified input of favorite genres, we would like to recommend a short list of movies that are diverse, and yet representative, of those genres. The similarity score between movies that we use is derived from user ratings, as in \citep{Lindgren2015}.

Let us now describe the problem setting in more detail. Let $\N$ be a set of movies, and $G$ be the set of all movie genres. For a movie $i \in \N$, we denote the set of genres of $i$ by $G(i)$ (each movie may have multiple genres). Similarly, for a genre $g \in G$, denote the set of all movies in that genre by $\N(g)$. Let $s_{i,j}$ be a non-negative similarity score between movies $i,j \in \N$, and suppose a user $u$ seeks a representative set of movies from genres $G_u \subseteq G$. Note that the set of movies from these genres is $\N_u = \cup_{g \in G_u} \N(g)$. Thus, a reasonable utility function for choosing a diverse yet representative set of movies $S$ for $u$ is 
\begin{equation} \label{eq:cut_fun}
f_u(S) = \sum \limits_{i \in S} \sum \limits_{j \in \N_u} s_{i,j} - \lambda \sum \limits_{i \in S} \sum \limits_{j \in S} s_{i,j}
\end{equation} 
for some parameter $0 \leq \lambda \leq 1$. Observe that the first term is a sum-coverage function that captures the representativeness of $S$, and the second term is a dispersion function penalizing similarity within $S$. Moreover, for $\lambda = 1$ this utility function reduces to the simple cut function. 

The user may specify an upper limit $m$ on the number of movies in his recommended set. In addition, he is also allowed to specify an upper limit $m_g$ on the number of movies from genre $g$ in the set for each $g \in G_u$ (we call the parameter $m_g$ a \emph{genre limit}). The first constraint corresponds to an $m$-uniform matroid over $\N_u$, while the second constraint corresponds to the intersection of $|G_u|$ partition matroids (each imposing the genre limit of one genre). Thus, our constraints in this movie recommendation corresponds to the intersection of $1 + |G_u|$ matroids, which is a $(1 + |G_u|)$-extendible system (in fact, a more careful analysis shows that it corresponds to a $|G_u|$-extendible system).

For our experiments, we use the MovieLens 20M dataset, which features 20 million ratings of 27,000 movies by 138,000 users. To obtain a similarity score between movies, we take an approach developed in \citep{Lindgren2015}. First, we fill missing entries of an incomplete movie-user matrix $M \in \mathbb{R}^{n \times m}$ via low-rank matrix completion \citep{Candes2008, Hastie2015}, then we randomly sample to obtain a matrix $\tilde{M} \in \mathbb{R}^{n \times k}$ where $k \ll m$ and the inner products between rows is preserved. The similarity score between movies $i$ and $j$ is then defined as the inner product of their corresponding rows in $\tilde{M}$. In our experiment, we set the total recommended movies limit to $m=10$. The genre limits $m_g$ are always equal for all genres, and we vary them from 1 to 9. Finally, we set our favorite genres $G_u$ as Adventure, Animation and Fantasy. For each algorithm and test instance, we record the function value of the returned solution $S$ and the number of calls to $f$, which is a machine-independent measure of run-time.

\begin{figure}[h] 
\centering  
\subfigure[Solution Quality]{\label{fig:fun_val}\includegraphics[width=60mm]{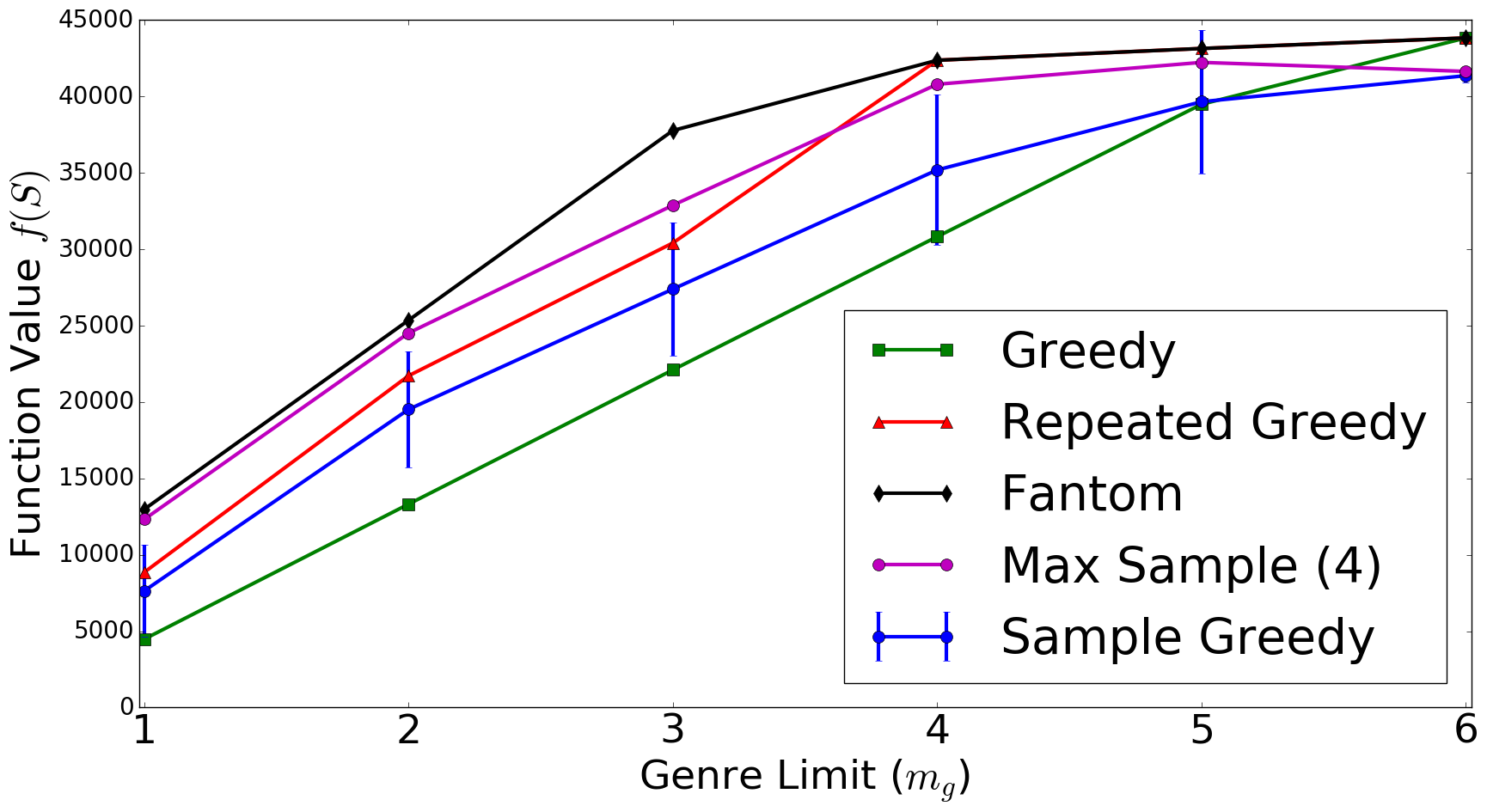}}
\subfigure[Run Time]{\label{fig:run_time}\includegraphics[width=60mm]{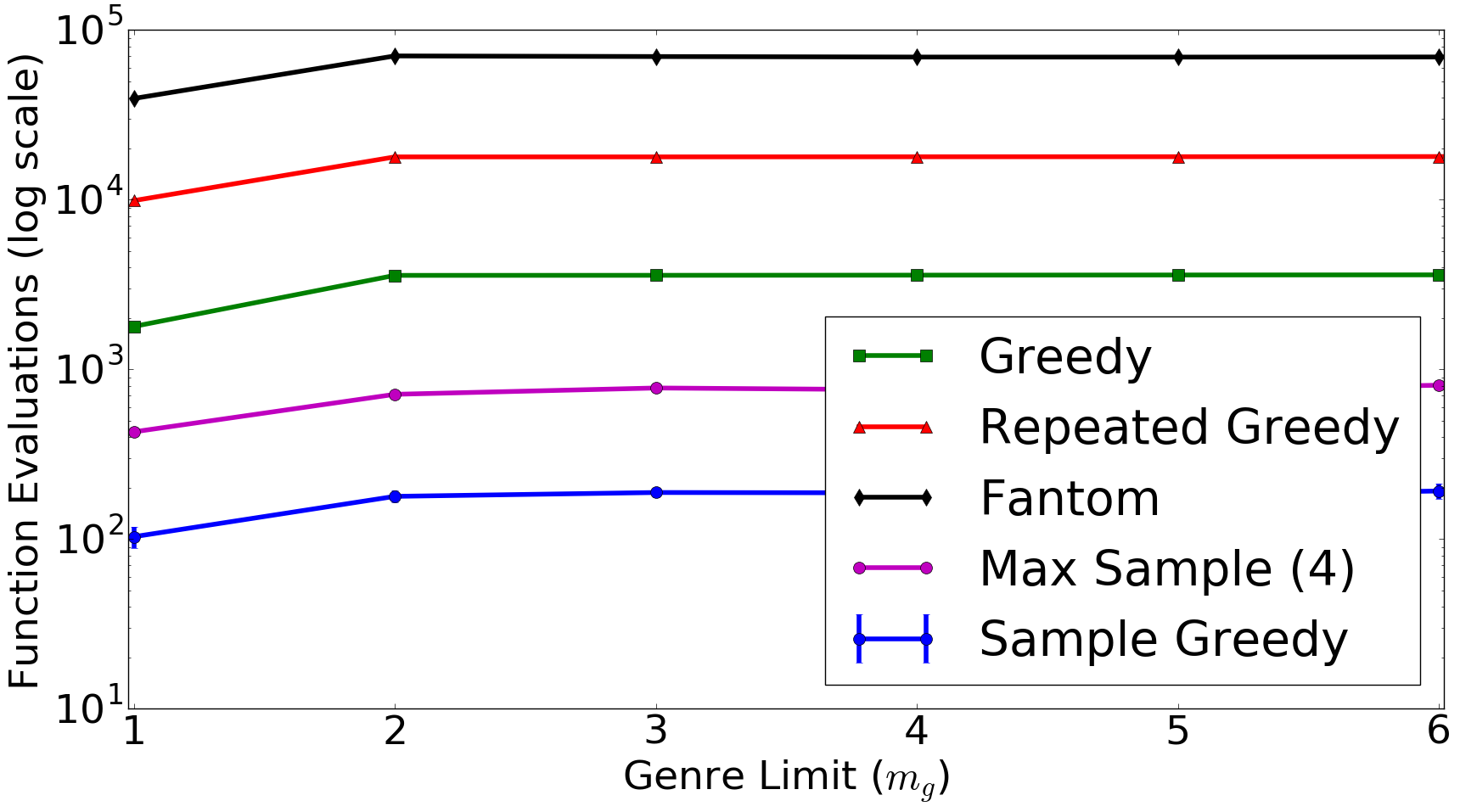}} \\
\subfigure[Ratio Comparison $m_g = 1$]{\label{fig:ratio_comp_1}\includegraphics[width=60mm]{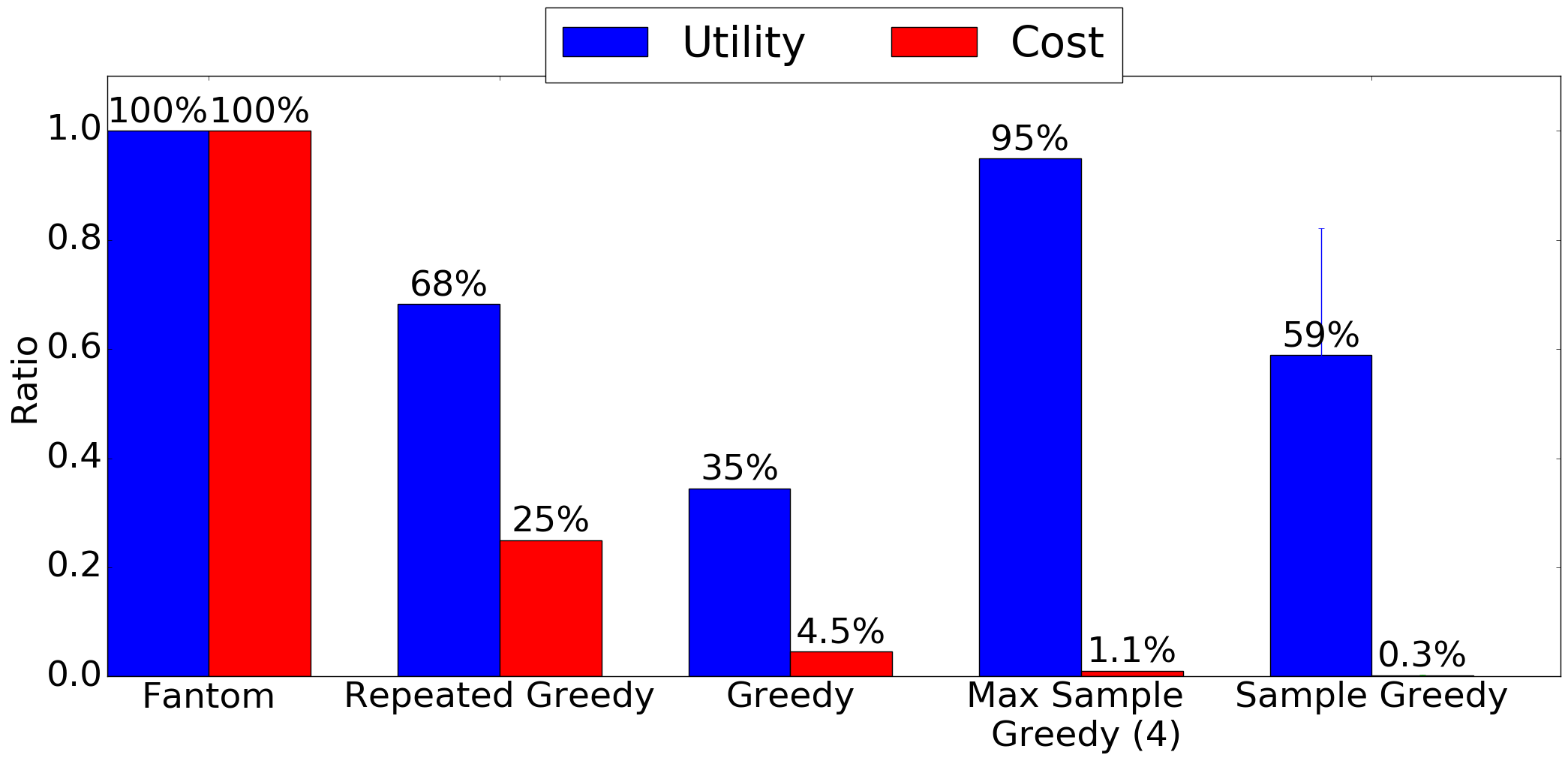}}
\subfigure[Ratio Comparison $m_g = 4$]{\label{fig:ratio_comp_4}\includegraphics[width=60mm]{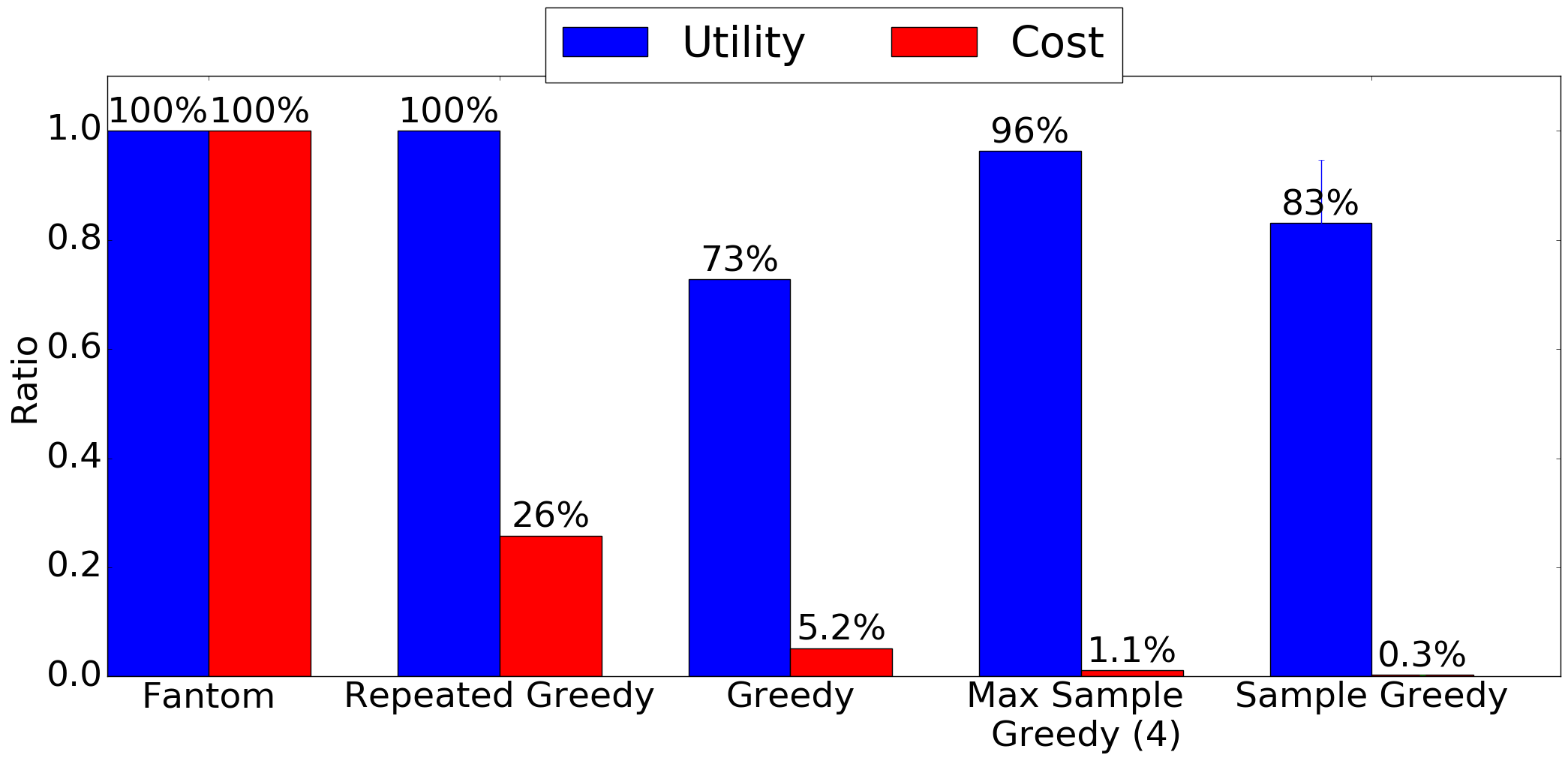}} 
\small \\
\caption{Performance Comparison. \ref{fig:fun_val} shows the function value of the returned solutions for tested algorithms with varying genre limit $m_g$. \ref{fig:run_time} shows the number of function evaluations on a logarithmic scale with varying genre limit $m_g$. \ref{fig:ratio_comp_1} and \ref{fig:ratio_comp_4} show the ratio of solution quality and cost with FANTOM as a baseline.} \label{fig:performance}
\end{figure}

Figure~\ref{fig:fun_val} shows the value of the solution sets for the various algorithms. As we see from Figure~\ref{fig:fun_val}, FANTOM consistently returns a solution set with the highest function value; however, \algdt and \algrd return solution sets with similarly high function values. We see that Max Sample Greedy, even for four runs, significantly increases the performance for more constrained problems. Note that for such more constrained problems, Greedy returns solution sets with much lower function values. Figure~\ref{fig:run_time} shows the number of function calls made by each algorithm as the genre limit $m_g$ is varied. For each algorithm, the number of function calls remains roughly constant as $m_g$ is varied---this is due to the lazy greedy implementation that takes advantage of submodularity to reduce the number of function calls. We see that for our problem instances, \algdt runs about an order of magnitude faster than FANTOM and \algrd runs roughly three orders of magnitude faster than FANTOM. Moreover, boosting \algrd by executing it a few times does not incur a significant increase in cost.

To better analyze the tradeoff between the utility of the solution value  and the cost of run time, we compare the ratio of these measurements for the various algorithms using FANTOM as a baseline. See Figure~\ref{fig:ratio_comp_1} and \ref{fig:ratio_comp_4} for these ratio comparisons for genre limits $m_g = 1$ and $m_g = 4$, respectively. For the case of $m_g=1$, we see that boosted \algrd provides nearly the same utility as FANTOM, while only incurring 1.09\% of the computational cost. Likewise, for the case of $m_g =4$, \algdt achieves the same utility as FANTOM, while incurring only a quarter of the cost. Thus, we may conclude that our algorithms provide solutions whose quality is on par with current state of the art, and yet they run in a small fraction of the time.

\begin{figure}[h] 
\centering  
\includegraphics[width=\textwidth]{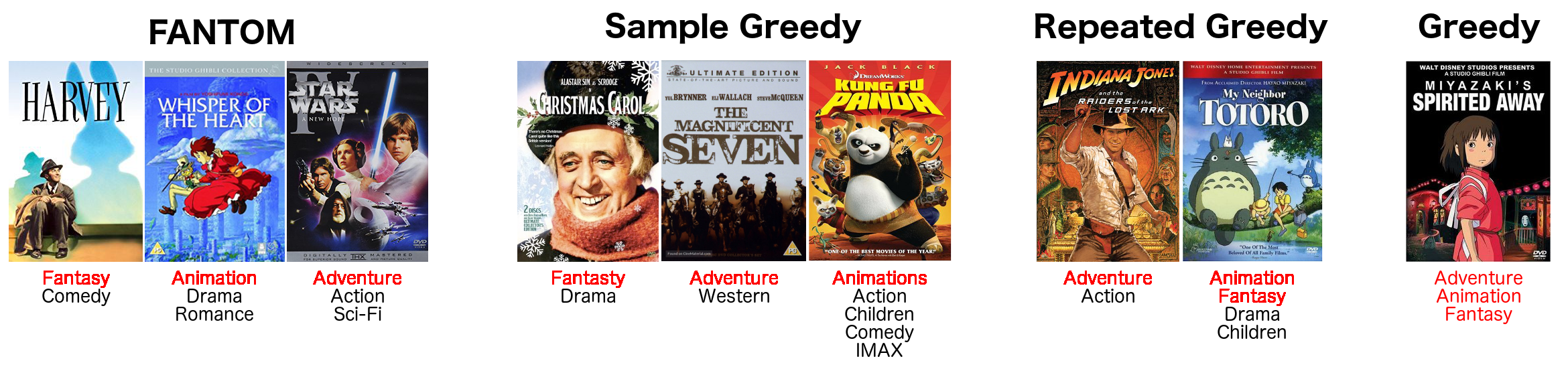} 
\caption{Solution Sets. The movies in the solution sets for $m_g=1$ returned by FANTOM, Sample Greedy, Repeated Greedy and Greedy are listed here, along with genre information. The favorite genres ($G_u$) are in red.} \label{fig:solution_sets}
\end{figure}

While Greedy may get stuck in poor locally optimal solutions, \algdt and \algrd avoid this by greedily combing through the solution space many times and selecting random sets, respectively. Fortunately, the movie recommendation system has a very interpretable solution so we can observe this phenomenon. See Figure~\ref{fig:solution_sets} for the movies recommended by the different algorithms. Because $m_g = 1$, we are constrained here to have at most one movie from Adventure, Animation and Fantasy. As seen in Figure~\ref{fig:solution_sets}, FANTOM and \algrd return maximum size solution sets that are both diverse and representative of these genres. On the other hand, Greedy gets stuck choosing a single movie that belongs to all three genres, thus, precluding any other choice of movie from the solution set. 

%
%
\bibliographystyle{plainnat}
\bibliography{./tex/references-sub}
%

\appendix

\section{Hardness of Maximization over \texorpdfstring{$k$}{k}-Extendible Systems} \label{sec:hardness}
In this appendix, we prove Theorems~\ref{thm:linear_hardness} and~\ref{thm:submodular_hardness}. The proof consists of two steps. In the first step, we will define two $k$-extendible systems which are indistinguishable in polynomial time. The inapproximability result for linear objectives will follow from the indistinguishability of these systems and the fact that the size of their maximal sets are very different. In the second step we will define monotone submodular objective functions for the two $k$-extendible systems. Using the symmetry gap technique of~\citep{Vondrak2013}, we will show that these objective functions are also indistinguishable, despite being different. Then, we will use the differences between the objective functions to prove the slightly stronger inapproximability result for monotone submodular objectives.

Given three positive integers $k, h$ and $m$ such that $h$ is an integer multiple of $2k$, let us construct a $k$-extendible system $\MM(k, h, m) = (\NN_{k. h, m}, \II_{k, h, m})$ as follows. The ground set of the system is $\NN_{k, h, m} = \cup_{i = 1}^h H_i(k,m)$, where $H_i(k, m) = \{u_{i, j} ~|~ 1 \leq j \leq km\}$. A set $S \subseteq \NN_{k, h, m}$ is independent (i.e., belongs to $\II_{k, h, m}$) if and only if it obeys the following inequality:
\[
	g_{m,k}(|S \cap H_1(k, m)|) + |S \setminus H_1(k, m)|
	\leq
	m
	\enspace,
\]
where the function $g_{m,k}$ is defined by
\[
	g_{m,k}(x) = \min\left\{x, \frac{2km}{h}\right\} + \max\left\{\frac{x - 2km/h}{k}, 0\right\}
	\enspace.
\]
Intuitively, a set is independent if its elements do not take too many ``resources'', where most elements requires a unit of resources, but elements of $H_1(k, m)$ take only $1/k$ unit of resources each once there are enough of them. Consequently, the only way to get a large independent set is to pack many $H_1(k, m)$ elements.

\begin{lemma}
For every choice of $h$ and $m$, $\MM(k, h, m)$ is a $k$-extendible system.
\end{lemma}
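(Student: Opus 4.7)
The plan is a direct case analysis of the $k$-extendibility definition, driven by two elementary properties of the capacity function $v(S) \coloneqq g_{m,k}(|S \cap H_1(k,m)|) + |S \setminus H_1(k,m)|$, for which $S \in \II_{k,h,m}$ iff $v(S) \leq m$. First, $g_{m,k}$ is non-decreasing, so $v$ is monotone in $S$; this immediately gives that $(\NN_{k,h,m}, \II_{k,h,m})$ is an independence system. Second, I will establish the slope bounds $1/k \leq g_{m,k}(x+1) - g_{m,k}(x) \leq 1$ for every non-negative integer $x$ by looking separately at the linear piece of slope $1$, the linear piece of slope $1/k$, and (carefully) at the integer step that may straddle the breakpoint $2km/h$, where the marginal is a convex combination of $1$ and $1/k$ and hence still lies in $[1/k,1]$.

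With those bounds, adding any single element to any set raises $v$ by at most $1$, removing one element outside $H_1$ lowers $v$ by exactly $1$, and removing any $k$ elements from a set's intersection with $H_1$ lowers $v$ by at least $1$. Now fix $A \in \II_{k,h,m}$, an extension $B \supsetneq A$ in $\II_{k,h,m}$, and an element $e \notin A$ with $A + e \in \II_{k,h,m}$; I may assume $e \notin B$, else $Y = \varnothing$ works. I split on whether $(B \setminus A) \setminus H_1$ is empty. If it is non-empty, I pick any $y$ there and set $Y = \{y\}$; the two contributions combine to give $v((B \setminus Y) + e) \leq v(B) \leq m$ whether $e \in H_1$ or not. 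Otherwise $B \setminus A \subseteq H_1$: write $c = |B \setminus A|$ and take $Y$ to be any $\min(c,k)$ elements of $B \setminus A$. If $c \leq k$ then $Y = B \setminus A$ and $(B \setminus Y) + e = A + e$, which is independent by hypothesis. If $c > k$ then removing $k$ elements from $B \cap H_1$ drops $g_{m,k}$ by at least $1$, absorbing the at-most-$1$ increase from adding $e$, so $v((B \setminus Y) + e) \leq v(B) \leq m$. In all cases $|Y| \leq k$.

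The step I expect to require the most care is verifying the marginal lower bound $g_{m,k}(x+1) - g_{m,k}(x) \geq 1/k$ uniformly in $x$; the breakpoint $2km/h$ is not required to be an integer, so a single integer step can straddle both pieces and needs a small hand computation. Everything else is a clean consequence of the case split: the main idea is that in the subcase $B \setminus A \subseteq H_1$ with $c \leq k$ we reduce $(B \setminus Y) + e$ exactly to the assumed-independent set $A + e$, while when $c > k$ the slope-$1/k$ lower bound provides just enough slack to remove exactly $k$ elements to accommodate $e$.
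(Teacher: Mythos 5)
Your proof is correct and follows essentially the same route as the paper: show $g_{m,k}$ is monotone to get the independence system property, establish the marginal bounds $1/k \leq g_{m,k}(x+1) - g_{m,k}(x) \leq 1$, take $Y = B \setminus A$ when $|B \setminus A| \leq k$, and otherwise use the $1/k$ lower bound to show that removing $k$ elements absorbs the at-most-$1$ increase from adding $e$. Your extra case split on whether $(B \setminus A) \setminus H_1$ is empty is harmless but unnecessary, since the paper's uniform observation---that removing \emph{any} element decreases the capacity by at least $1/k$---covers both subcases at once.
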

\begin{proof}
First, observe that $g(x)$ is a monotone function, and therefore, a subset of an independent set of $\MM(k, h, m)$ is also independent. Also, $g(0) = 0$, and therefore, $\varnothing \in \II_{k, h, m}$. This proves that $\MM(k, h, m)$ is an independence system. In the rest of the proof we show that it is also $k$-extendible.

Consider an arbitrary independent set $C \in \II_{k, h, m}$, an independent extension $D$ of $C$ and an element $u \not \in D$ for which $C + u \in \II_{k, h, m}$. We need to find a subset $Y \subseteq D \setminus C$ of size at most $k$ such that $D \setminus Y + u \in \II_{k, h, m}$. If $|D \setminus C| \leq k$, then we can simply pick $Y = D \setminus C$. Thus, we can assume from now on that $|D \setminus C| > k$.

Let $\Sigma(S) = g(|S \cap H_1(k, m)|) + |S \setminus H_1(k, m)|$. By definition, $\Sigma(D) \leq m$ because $D \in \II_{k, h, m}$. Observe that $g(x)$ has the property that for every $x \geq 0$, $k^{-1} \leq g(x + 1) - g(x) \leq 1$. Thus, $\Sigma(S)$ increases by at most $1$ every time that we add an element to $S$, but decreases by at least $1/k$ every time that we remove an element from $S$. Hence, if we let $Y$ be an arbitrary subset of $D \setminus C$ of size $k$, then
\[
	\Sigma(D \setminus Y + u)
	\leq
	\Sigma(D) - \frac{|Y|}{k} + 1
	=
	\Sigma(D)
	\leq
	m
	\enspace,
\]
which implies that $D \setminus Y + u \in \II_{k, h, m}$.
\end{proof}\vspace{0pt}


Before presenting the second $k$-extendible system, let us show that $\MM(k, h, m)$ contains a large independent set.

\begin{observation} \label{ob:large_independent_set}
$\MM(k, h, m)$ contains an independent set whose size is $k(m - 2km/h) + 2km/h \geq mk(1-2k/h)$. Moreover, there is such set in which all elements belong to $H_1(k, m)$.
\end{observation}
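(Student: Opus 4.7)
The plan is to construct the desired independent set explicitly inside $H_1(k,m)$ and then verify the two claims by direct calculation from the definitions of $\II_{k,h,m}$ and $g_{m,k}$. Since every element of $H_1(k,m)$ contributes to the independence condition only via the function $g_{m,k}$ (and the elements are interchangeable), the natural candidate is any subset of $H_1(k,m)$ of the largest size that the independence constraint permits.

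First, I would let $s := k(m - 2km/h) + 2km/h$ and take $S$ to be an arbitrary subset of $H_1(k,m)$ of size $s$. To see that this is legitimate I need $s \leq |H_1(k,m)| = km$, which follows by rewriting $s = km - (k-1)(2km/h)$ and observing that the second term is non-negative. Next, I would check that $S \in \II_{k,h,m}$. Since $S \setminus H_1(k,m) = \varnothing$, the independence condition collapses to $g_{m,k}(s) \leq m$. The assumption that $h$ is a positive integer multiple of $2k$ gives $h \geq 2k$, hence $2km/h \leq m$, and in particular $s = 2km/h + k(m - 2km/h) \geq 2km/h$. Plugging into the definition of $g_{m,k}$,
\[
g_{m,k}(s) = \frac{2km}{h} + \frac{s - 2km/h}{k} = \frac{2km}{h} + (m - 2km/h) = m,
\]
so $S$ is independent, as needed.

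For the second part of the observation, the inequality $k(m - 2km/h) + 2km/h \geq mk(1 - 2k/h)$ is immediate algebra: expanding the left-hand side gives $mk - 2k^2m/h + 2km/h = mk(1 - 2k/h) + 2km/h$, which exceeds $mk(1 - 2k/h)$ by the non-negative quantity $2km/h$.

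Essentially no obstacle arises beyond a small bookkeeping point, namely that $s$ should be an integer. When $h \mid 2km$ this is automatic; otherwise one can replace $s$ by $\lfloor s \rfloor$, which still satisfies the independence check (since $g_{m,k}$ is monotone) and still meets the stated lower bound up to the standard accounting used in the surrounding hardness construction. I would mention this briefly but would not dwell on it, since the observation is purely used as input to the subsequent symmetry-gap argument.
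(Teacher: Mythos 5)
Your proof is correct and follows essentially the same route as the paper's: take a subset of $H_1(k,m)$ of size $s = k(m-2km/h)+2km/h$ and verify $g_{m,k}(s)\leq m$ directly from the definition. You are in fact slightly more careful than the paper (explicitly checking $s\geq 2km/h$ so the min/max resolve, and flagging the integrality of $s$, which the paper silently ignores), but these are refinements of the same argument rather than a different approach.
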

\begin{proof}
Let $s = k(m - 2km/h) + 2km/h$, and consider the set $S = \{u_{1, j} \mid 1 \leq j \leq s\}$. This is a subset of $H_1(k, m) \subseteq \NN_{k, h, m}$ since $s \leq km$. Also,
\begin{align*}
	g(|S|)
	={} &
	g(s) 
	=
	\min\left\{s, \frac{2km}{h}\right\} + \max\left\{\frac{s - 2km/h}{k}, 0\right\}\\
	\leq{} &
	\frac{2km}{h} + \max\left\{\frac{[k(m - 2km/h) + 2km/h] - 2km/h}{k}, 0\right\}\\
	={} &
	\frac{2km}{h} + \max\left\{m - \frac{2km}{h}, 0\right\}
	=
	m
	\enspace.
\end{align*}
Since $S$ contains only elements of $H_1(k, m)$, its independence follows from the above inequality.
\end{proof}\vspace{0pt}

Let us now define our second $k$-extendible system $\MM'(k, h, m) = (\NN_{k, h, n}, \II'_{k, n})$. The ground set of this system is the same as the ground set of $\MM(k, h, m)$, but a set $S \subseteq \NN_{k, h, m}$ is considered independent in this independence system if and only if its size is at most $m$. Clearly, this is a $k$-extendible system (in fact, it is a uniform matroid). Moreover, note that the ratio between the sizes of the maximal sets in $\cM(k, h, m)$ and $\cM'(k, h, m)$ is at least
\[
	\frac{mk(1 - 2k/h)}{m}
	=
	k(1 - 2k/h)
	\enspace.
\]
Our plan is to show that it takes exponential time to distinguish between the systems $\MM(k, h, m)$ and $\MM'(k, h, m)$, and thus, no polynomial time algorithm can provide an approximation ratio better than this ratio for the problem of maximizing the cardinality function (i.e., the function $f(S) = |S|$) subject to a $k$-extendible system constraint.

Consider a polynomial time deterministic algorithm that gets either $\MM_{k, h, m}$ or $\MM'_{k, h, m}$ after a random permutation was applied to the ground set. We will prove that with high probability the algorithm fails to distinguish between the two possible inputs. Notice that by Yao's lemma, this implies that for every random algorithm there exists a permutation for which the algorithms fails with high probability to distinguish between the inputs.

Assuming our deterministic algorithm gets $\MM'_{k, h, m}$, it checks the independence of a polynomial collection of sets. Observe that the sets in this collection do not depend on the permutation because the independence of a set in $\MM'_{k, h, m}$ depends only on its size, and thus, the algorithm will take the same execution path given every permutation. If the same algorithm now gets $\MM_{k, h, m}$ instead, it will start checking the independence of the same sets until it will either get a different answer for one of the checks (different than what is expected for $\MM'_{k, h, m}$) or it will finish all the checks. Note that in the later case the algorithm must return the same answer that it would have returned had it been given $\MM'_{k, h, m}$. Thus, it is enough to upper bound the probability that any given check made by the algorithm will result in a different answer given the inputs $\MM_{k, h, m}$ and $\MM'_{k, h, m}$.

\begin{lemma} \label{le:independence_different_unlikely}
Following the application of the random ground set permutation, the probability that a set $S$ is independent in $\MM_{k, h, m}$ but not in $\MM'_{k, h, m}$, or vice versa, is at most $e^{-\frac{2km}{h^2}}$.
\end{lemma}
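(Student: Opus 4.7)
The plan is to prove the lemma in three steps: first, reduce the "bad event" to a single direction by a monotonicity observation on $g_{m,k}$; second, extract from the definitions a deterministic upper bound on $|S|$ in the remaining direction; third, finish via Hoeffding's inequality for sampling without replacement.

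First I would observe that $g_{m,k}(x) \leq x$ for all $x \geq 0$: on the branch $x \leq 2km/h$ this is an equality, and on the branch $x > 2km/h$ the second term $(x - 2km/h)/k$ is at most $x - 2km/h$ since $k \geq 1$. Consequently, if $|S| \leq m$ then
\[
g_{m,k}(|S \cap H_1(k,m)|) + |S \setminus H_1(k,m)| \leq |S \cap H_1(k,m)| + |S \setminus H_1(k,m)| = |S| \leq m,
\]
so $S \in \II_{k,h,m}$. Hence $\II'_{k, n} \subseteq \II_{k,h,m}$, and the only possible discrepancy is the event $S \in \II_{k,h,m} \setminus \II'_{k, n}$, i.e.\ $|S| > m$ yet $S$ still satisfies the $g_{m,k}$-inequality.

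Next I would extract two constraints that hold deterministically on this bad event. Write $x := |S \cap H_1(k,m)|$. The independence condition in $\II_{k,h,m}$ rearranges to $x - g_{m,k}(x) \geq |S| - m > 0$, which rules out $x \leq 2km/h$ (where $g_{m,k}(x) = x$) and hence forces $x > 2km/h$. On the other branch, a short computation gives $x - g_{m,k}(x) = (k-1)(x - 2km/h)/k$, and combining with $x \leq |H_1(k,m)| = km$ yields the size bound
\[
|S| \leq m + \frac{(k - 1)(km - 2km/h)}{k} \leq m + (k-1)m = km.
\]

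Third, I would apply Hoeffding's inequality for sampling without replacement. Under the random ground set permutation, $X := |S \cap H_1(k, m)|$ is hypergeometric with mean $\mu = |S|/h$ (sample size $|S|$, population size $khm$, $km$ marked elements), and Hoeffding gives $\Pr[X \geq \mu + t] \leq \exp(-2t^2/|S|)$ for $t > 0$. Setting $t = 2km/h - |S|/h$ (positive since $|S| \leq km < 2km$), the exponent becomes $-2(2km - |S|)^2/(h^2 |S|)$. The elementary inequality $(2km - u)^2 \geq km \cdot u$ holds for every $u \in [0, km]$ because the quadratic $u^2 - 5km\, u + 4k^2 m^2$ has roots $u = km$ and $u = 4km$; applying this with $u = |S|$ shows the exponent is at most $-2km/h^2$, which is exactly the desired bound. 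The main obstacle is extracting the deterministic bound $|S| \leq km$ in the second step, since Hoeffding at the threshold $2km/h$ alone does not meet the required exponent without it; once that bound is in hand the concentration argument goes through cleanly.
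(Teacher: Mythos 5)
Your proposal is correct and follows essentially the same route as the paper: both reduce the discrepancy event to the tail event $|S \cap H_1(k,m)| \geq 2km/h$ in the regime $m < |S| \leq km$ and then apply the Hoeffding-type bound for the hypergeometric distribution, using $|S| \leq km$ to reach the exponent $-2km/h^2$. The only (cosmetic) differences are that you extract $|S| \leq km$ from the independence inequality itself rather than noting directly that $|S| > km$ is dependent in both systems, and you center the tail at the exact threshold $t = (2km - |S|)/h$ with a small quadratic inequality where the paper uses $t = km/h$ and monotonicity of the mean.
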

\begin{proof}
Observe that as long as we consider a single set, applying the permutation to the ground set is equivalent to replacing $S$ with a random set of the same size. So, we are interested in the independence in $\MM_{k, h, m}$ and $\MM'_{k, h, m}$ of a random set of size $|S|$. If $|S| > km$, then the set is never independent in either $\MM_{k, h, m}$ or $\MM'_{k, h, m}$, and if $|S| \leq m$, then the set is always independent in both $\MM_{k, h, m}$ and $\MM'_{k, h, m}$. Thus, the interesting case is when $m < |S| \leq km$.

Let $X = |S \cap H_1(k, m)|$. Notice that $X$ has a hypergeometric distribution, and $\mathbb{E}[X] = |S|/h$. Thus, using bounds given in \citep{Skala13} (these bounds are based on results of \citep{Chvatal79,Hoeffding1963}), we get
\[
	\Pr\left[X \geq \frac{2km}{h}\right]
	=
	\Pr\left[X \geq \bE[|X|] + \frac{km}{h}\right]
	\leq
	e^{-2\left(\frac{km/h}{|S|}\right)^2 \cdot |S|}
	=
	e^{-\frac{2k^2m^2}{h^2 \cdot |S|}}
	\leq
	e^{-\frac{2km}{h^2}}
	\enspace.
\]
The lemma now follows by observing that $X \leq 2km/h$ implies that $S$ is a dependent set under both $\MM_{k, h, m}$ and $\MM'_{k, h, m}$.
\end{proof}\vspace{0pt}

We now think of $m$ as going to infinity and of $h$ and $k$ as constants. Notice that given this point of view the size of the ground set $\cN_{k, h, m}$ is $nkh = O(m)$. Thus, the last lemma implies, via the union bound, that with high probability an algorithm making a polynomial number (in the size of the ground set) of independence checks will not be able to distinguishes between the cases in which it gets as input $\cM_{k, h, m}$ or $\cM'_{k, h, m}$.

We are now ready to prove Theorem~\ref{thm:linear_hardness}.

\begin{proof}[Proof of Theorem~\ref{thm:linear_hardness}]
Consider an algorithm that needs to maximize the cardinality function over the $k$-extendible system $\cM_{k, h, m}$ after the random permutation was applied, and let $T$ be its output set. Notice that $T$ must be independent in $\cM_{k, h, m}$, and thus, its size is always upper bounded by $mk$. Moreover, since the algorithm fails, with high probability, to distinguish between $\cM_{k, h, m}$ and $\cM'_{k, h, m}$, $T$ is with high probability also independent in $\cM'_{k, h, m}$, and thus, has a size of at most $m$. Therefore, the expected size of $T$ cannot be larger than $m + o(1)$.

On the other hand, Lemma~\ref{ob:large_independent_set} shows that $\cM_{k, h, m}$ contains an independent set of size at least $mk(1 - 2k/h)$. Thus, the approximation ratio of the algorithm is no better than
\[
	\frac{mk(1 - 2k/h)}{m + o(1)}
	\geq
	\frac{mk(1 - 2k/h)}{m} - \frac{k}{m} o(1)
	=
	k - 2k^2/h - o(1)
	\enspace.
\]
Choosing a large enough $h$ (compared to $k$), we can make this approximation ratio larger than $k - \ee$ for any constant $\ee > 0$.
\end{proof} \vspace{0in}

To prove a stronger inapproximability result for monotone submodular objectives, we need to associate a monotone submodular function with each one of our $k$-extendible systems. Towards this goal, consider the monotone submodular function $f_h : 2^{\NN_h} \rightarrow \mathbb{R}^+$ defined over the ground set $\NN_h = [h]$ by
\[
	f_h(S)
	=
	\min\{|S|, 1\}
	\enspace.
\]

Let $F_h\colon [0, 1]^{\cN_h} \to \nnR$ be the mutlilinear extension of $f_h$, i.e., $F_h(x) = \bE[f_h(R(x))]$ for every vector $x \in [0, 1]^{\cN_h}$ (where $R(x)$ is a random set containing every element $u \in \cN_u$ with probability $x_u$, independently). Additionally, given a vector $x \in [0, 1]^{\NN_h}$, let us define $\bar{x} = (\| x \|_1 / h) \cdot \characteristic_{\NN_h}$. Notice that $f_h$ is invariant under any permutation of the elements of $\NN_h$. Thus, by Lemma~3.2 in \citep{Vondrak2013}, for every $\ee' > 0$ there exists $\delta_h > 0$ and two functions $\hat{F}_h, \hat{G}_h : [0,1]^{\NN_h} \rightarrow \nnR$ with the following properties.
\begin{itemize}
	\item For all $x \in [0, 1]^{\NN_h}$: $\hat{G}_h(x) = \hat{F}_h(\bar{x})$.
	\item For all $x \in [0, 1]^{\NN_h}$, $|\hat{F}_h(x) - F_h(x)| \leq \ee'$.
	\item Whenever $|x - \bar{x}|^2_2 \leq \delta_h$, $\hat{F}_h(x) = \hat{G}_h(x)$.
	\item The first partial derivatives of $\hat{F}_h$ and $\hat{G}_h$ are absolutely continuous.
	\item $\frac{\partial \hat{F}_h}{\partial x_u}, \frac{\partial \hat{G}_h}{\partial x_u} \geq 0$ everywhere for every $u \in \NN_h$.
	\item $\frac{\partial^2 \hat{F}_h}{\partial x_u \partial x_v}, \frac{\partial^2 \hat{G}_h}{\partial x_u \partial x_v} \leq 0$ 
		almost everywhere for every pair $u,v \in \NN_h$.
\end{itemize}

The objective function we associate with $\MM(k, h, m)$ is $\hat{F}_{h}(y(S))$, where $y(S)$ is a vector in $[0, 1]^{\NN_h}$ whose $i^{th}$ coordinate is $|S \cap H_i(k, m)| / (km)$. Similarly, the objective function we associate with $\MM'(k, h, m)$ is $\hat{G}_{h}(y(S))$. Notice that both objective functions are monotone and submodular by Lemma~3.1 of \citep{Vondrak2013}. 
We now bound the maximum value of a set in $\MM(k, h, m)$ and $\MM'(k, h, m)$ with respect to their corresponding objective functions.

\begin{lemma} \label{le:submodular_good_opt}
The maximum value of a set in $\MM(k, h, m)$ with respect to the objective $\hat{F}_{h}(y(S))$ is at least $1 - 2k/h - \ee'$.
\end{lemma}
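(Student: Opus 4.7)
The plan is to exhibit an explicit independent set of $\MM(k,h,m)$ on which $\hat{F}_h(y(\cdot))$ is large, using Observation~\ref{ob:large_independent_set} together with the pointwise closeness of $\hat{F}_h$ to the multilinear extension $F_h$.

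First I would take $S^{\star}$ to be the independent set guaranteed by Observation~\ref{ob:large_independent_set}, whose elements all lie inside $H_1(k,m)$ and whose size is $s := k(m - 2km/h) + 2km/h$. Since $S^{\star} \subseteq H_1(k,m)$, the vector $y(S^{\star}) \in [0,1]^{\NN_h}$ has first coordinate $s/(km) = 1 - 2k/h + 2/h$ and all other coordinates equal to $0$. I would then compute $F_h(y(S^{\star}))$ directly from the fact that $f_h(T) = \min\{|T|,1\}$, whose multilinear extension is $F_h(x) = 1 - \prod_{i \in \NN_h}(1-x_i)$. Plugging in, only the first factor differs from $1$, giving
\[
F_h(y(S^{\star})) \;=\; 1 - \left(1 - \tfrac{s}{km}\right) \;=\; 1 - \tfrac{2k}{h} + \tfrac{2}{h} \;\geq\; 1 - \tfrac{2k}{h}.
\]

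Next I would invoke the second bulleted property of $\hat{F}_h$, namely that $|\hat{F}_h(x) - F_h(x)| \leq \varepsilon'$ pointwise on $[0,1]^{\NN_h}$. Applying this at $x = y(S^{\star})$ yields
\[
\hat{F}_h(y(S^{\star})) \;\geq\; F_h(y(S^{\star})) - \varepsilon' \;\geq\; 1 - \tfrac{2k}{h} - \varepsilon',
\]
which is precisely the claimed lower bound, since $S^{\star}$ is a particular independent set of $\MM(k,h,m)$.

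There is no real obstacle here: the argument is essentially bookkeeping, since all of the heavy lifting (existence of the large independent set concentrated in $H_1(k,m)$, and the uniform approximation of $F_h$ by $\hat{F}_h$) has already been established. The only small thing to keep track of is that $y(S^{\star})$ lives in the unit cube, which is immediate because $|S^{\star} \cap H_1(k,m)| = s \leq km$.
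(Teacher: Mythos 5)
Your proof is correct and follows essentially the same route as the paper: take the large independent set inside $H_1(k,m)$ from Observation~\ref{ob:large_independent_set}, evaluate $F_h$ at $y(S)$ (which equals $s/(km)$ since only the first coordinate is nonzero), and pass to $\hat{F}_h$ via the uniform $\ee'$-approximation. The only cosmetic difference is that you compute $s/(km)$ exactly as $1-2k/h+2/h$ before dropping the $+2/h$, whereas the paper bounds $s \geq k(m-2km/h)$ directly.
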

\begin{proof}
Observation~\ref{ob:large_independent_set} guarantees the existence of an independent set $S \subseteq H_1(k, m)$ of size $s \geq k(m - 2km/h)$ in $\MM(k, h, m)$. The objective value associated with this set is
\[
	\hat{F}_h(y(S))
	\geq
	F_h(y(S)) - \ee'
	=
	\frac{s}{km} - \ee'
	\geq
	\frac{k(m - 2km/h)}{km} - \ee'\\
	=
	1 - 2k/h - \ee'
	\enspace.
\]
\end{proof}\vspace{0pt}

\begin{lemma} \label{le:submodular_bad_opt}
The maximum value of a set in $\MM'(k, h, m)$ with respect to the objective $\hat{G}_{h}(y(S))$ is at most $1 - e^{-1/k} + h^{-1} + \ee'$.
\end{lemma}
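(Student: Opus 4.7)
The plan is to exploit the symmetrization property $\hat{G}_h(x) = \hat{F}_h(\bar{x})$, which reduces the problem to evaluating $\hat{F}_h$ on a uniform vector with tiny coordinates. Let $S$ be any independent set in $\MM'(k, h, m)$, so $|S| \leq m$. By definition, $\|y(S)\|_1 = \sum_{i=1}^h |S \cap H_i(k,m)|/(km) = |S|/(km)$, which means that $\bar{y(S)} = p \cdot \characteristic_{\NN_h}$ where $p = |S|/(kmh) \leq 1/(kh)$. So by the symmetrization property of $\hat{G}_h$, we have $\hat{G}_h(y(S)) = \hat{F}_h(p \cdot \characteristic_{\NN_h})$.

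Next, I would use the uniform approximation property $|\hat{F}_h(x) - F_h(x)| \leq \ee'$ to pass to the multilinear extension $F_h$ itself, incurring an additive $\ee'$ error. The multilinear extension of $f_h(T) = \min\{|T|, 1\}$ has the closed form $F_h(x) = 1 - \prod_{u \in \NN_h}(1 - x_u)$, so when evaluated on $p \cdot \characteristic_{\NN_h}$ it equals $1 - (1-p)^h$. Since $p \leq 1/(kh)$, monotonicity gives
\[
\hat{G}_h(y(S)) \leq F_h(p \cdot \characteristic_{\NN_h}) + \ee' \leq 1 - (1 - 1/(kh))^h + \ee'.
\]

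The remaining step, and the only mildly technical one, is to show that $(1 - 1/(kh))^h \geq e^{-1/k} - 1/h$ (or equivalently, that the $1 - (1 - 1/(kh))^h$ term is at most $1 - e^{-1/k} + 1/h$). I would do this by taking logs: $\ln(1 - x) \geq -x - x^2$ for $x \in [0, 1/2]$, so $h \ln(1 - 1/(kh)) \geq -1/k - 1/(k^2 h)$, giving $(1 - 1/(kh))^h \geq e^{-1/k} \cdot e^{-1/(k^2 h)} \geq e^{-1/k}(1 - 1/(k^2 h))$. Since $e^{-1/k}/(k^2 h) \leq 1/h$, we obtain $1 - (1 - 1/(kh))^h \leq 1 - e^{-1/k} + 1/h$, which combined with the earlier bound yields the desired inequality $\hat{G}_h(y(S)) \leq 1 - e^{-1/k} + h^{-1} + \ee'$.

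I do not anticipate a real obstacle; the main thing to be careful about is setting up the scaling correctly so that the ``tiny coordinate'' $p$ is indeed $|S|/(kmh)$ (rather than, say, $|S|/(kh)$), since the full ground set has $kmh$ elements but each coordinate of $y(S)$ is normalized by $km$ and then further averaged over $h$ coordinates. Once that bookkeeping is right, the inequality falls out of the analytic bound on $(1-1/(kh))^h$.
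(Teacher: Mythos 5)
Your proof is correct and follows essentially the same route as the paper's: symmetrize via $\hat{G}_h(y(S)) = \hat{F}_h(\overline{y(S)})$ with $\overline{y(S)} = (|S|/(kmh))\cdot\characteristic_{\NN_h}$, pass to $F_h$ at the cost of $\ee'$, evaluate the closed form $1-(1-1/(kh))^h$, and bound it by $1-e^{-1/k}+h^{-1}$ via the same estimate $(1-1/(kh))^h \geq e^{-1/k}(1-1/(k^2h))$. The only cosmetic difference is that the paper first invokes monotonicity of $\hat{G}_h$ to reduce to sets of size exactly $m$, while you bound all $|S|\leq m$ directly; your bookkeeping of the coordinate value $|S|/(kmh)$ is correct.
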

\begin{proof}
The objective $\hat{G}_{h}(y(S))$ is monotone. Thus, the maximum value set in $\MM'(k, h, m)$ must be of size $m$. Notice that for every set $S$ of this size, we get
\begin{align*}
	\hat{G}_h(y(S))
	={} &
	\hat{F}_h(\overline{y(S)})
	=
	\hat{F}_h((kh)^{-1} \cdot \characteristic_{\NN_h})
	\leq
	F_h((kh)^{-1} \cdot \characteristic_{\NN_h}) + \ee'\\
	={} &
	1 - \left(1 - \frac{1}{kh}\right)^h + \ee'
	\leq
	1 - e^{-1/k}\left(1 - \frac{1}{k^2h}\right) + \ee'
	\leq
	1 - e^{-1/k} + h^{-1} + \ee'
	\enspace.
\end{align*}
\end{proof}\vspace{0pt}


As before, our plan is to show that after a random permutation is applied to the ground set it is difficult to distinguish between $\MM(k, h, m)$ and $\MM'(k, h, m)$ even when each one of them is accompanied with its associated objective. This will give us an inapproximability result which is roughly equal to the ratio between the bounds given by the last two lemmata.

Observe that Lemma~\ref{le:independence_different_unlikely} holds regardless of the objective function. Thus, $\cM(k, h, m)$ and $\cM'(k, h, m)$ are still polynomially indistinguishable. Additionally, the next lemma shows that their associated objective functions are also polynomially indistinguishable.

\begin{lemma} \label{le:value_different_unlikely}
Following the application of the random ground set permutation, the probability that any given set $S$ gets two different values under the two possible objective functions is at most $2h \cdot e^{-2mk\delta_h /h^2}$.
\end{lemma}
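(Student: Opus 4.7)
The plan is to show that the two objective functions can only disagree on a set $S$ when the vector $y(S)$ deviates significantly from its symmetrized version $\overline{y(S)}$, and then to show that after the random permutation such a deviation is unlikely by a concentration argument on each coordinate of $y(S)$.

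First, recall from the third bullet in the properties of $\hat F_h$ and $\hat G_h$ that $\hat F_h(x) = \hat G_h(x)$ whenever $\|x - \bar x\|_2^2 \leq \delta_h$. Thus, if $\hat F_h(y(S)) \neq \hat G_h(y(S))$, then necessarily
\[
\sum_{i=1}^{h}\left(\frac{|S \cap H_i(k,m)|}{km} - \frac{|S|}{kmh}\right)^{\!2} = \|y(S) - \overline{y(S)}\|_2^2 > \delta_h,
\]
where I used that $\|y(S)\|_1 = |S|/(km)$ because the sets $H_i(k,m)$ partition $\NN_{k,h,m}$. By an averaging argument, this means that there must exist some index $i \in [h]$ for which $\bigl||S \cap H_i(k,m)| - |S|/h\bigr| > km\sqrt{\delta_h/h}$.

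Next, I would fix such an index $i$ and analyze the random variable $X_i \coloneqq |S \cap H_i(k,m)|$ after the random permutation of the ground set is applied. By symmetry, $X_i$ has a hypergeometric distribution with mean $|S|/h$ (the set $S$ can be viewed as a uniformly random size-$|S|$ subset of $\NN_{k,h,m}$, and $H_i(k,m)$ has size $km$ inside a ground set of size $kmh$). Applying the same Hoeffding-type tail bound for hypergeometric distributions used in the proof of Lemma~\ref{le:independence_different_unlikely} (from \citep{Skala13}), I obtain
\[
\Pr\!\left[\bigl|X_i - |S|/h\bigr| > km\sqrt{\delta_h/h}\right] \leq 2 \exp\!\left(-\frac{2(km)^2\delta_h/h}{|S|}\right) \leq 2 \exp\!\left(-\frac{2mk\delta_h}{h^2}\right),
\]
where the last inequality uses $|S| \leq |\NN_{k,h,m}| = kmh$.

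Finally, taking a union bound over the $h$ coordinates gives the claimed bound of $2h \cdot e^{-2mk\delta_h/h^2}$ on the probability that $\hat F_h(y(S)) \neq \hat G_h(y(S))$. The main obstacle, which is really only a bookkeeping issue, is making sure the concentration bound is applied to a quantity that is correctly identified with a hypergeometric random variable and handling the worst case $|S|$ in the exponent; both reduce to observing that $|S| \leq kmh$ and using the same tail bound already invoked in the earlier lemma.
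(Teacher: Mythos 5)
Your proposal is correct and follows essentially the same route as the paper's proof: reduce the disagreement event to the deviation $\|y(S)-\overline{y(S)}\|_2^2 > \delta_h$ via the third listed property of $\hat F_h,\hat G_h$, bound each $|X_i - |S|/h|$ by the hypergeometric Hoeffding tail from \citep{Skala13} using $|S|\leq kmh$, and union bound over the $h$ blocks and the two tails. The only cosmetic difference is that you argue in the contrapositive direction (disagreement implies a deviating coordinate) while the paper bounds the deviation first and then shows agreement on its complement; the union bound over all $h$ coordinates correctly handles the fact that the deviating index is itself random.
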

\begin{proof}
Recall that, as long as we consider a single set $S$, applying the permutation to the ground set is equivalent to replacing $S$ with a random set of the same size. Hence, we are interested in the value under the two objective functions of a random set of size $|S|$. Define $X_i = |S \cap H_i(k, m)|$. Since $X_i$ has the a hypergeometric distribution, the bound of \citep{Skala13} gives us
\begin{align*}
	\Pr\left[X_i \geq \frac{|S|}{h} + mk \cdot \sqrt{\frac{\delta_h}{h}} \right]
	={} &
	\Pr\left[X_i \geq \mathbb{E}[X_i] + mk \cdot \sqrt{\frac{\delta_h}{h}} \right]\\
	\leq{} &
	e^{-2 \cdot \left(\frac{mk \cdot \sqrt{\delta_h/h}}{|S|}\right)^2 \cdot |S|}
	=
	e^{-\frac{2\delta_h}{h} \cdot \frac{m^2k^2}{|S|}}
	\leq
	e^{-2mk\delta_h/h^2}
	\enspace.
\end{align*}
Similarly, we also get
\[
	\Pr\left[X_i \leq \frac{|S|}{h} - mk \cdot \sqrt{\frac{\delta_h}{h}} \right]
	\leq
	e^{-2mk\delta_h /h^2}
	\enspace.
\]
Combining both inequalities using the union bound now yields
\[
	\Pr\left[\left|X_i - \frac{|S|}{h}\right| \geq mk \cdot \sqrt{\frac{\delta_h}{h}} \right]
	\leq
	2 e^{-2mk\delta_h /h^2}
	\enspace.
\]

Using the union bound again, the probability that $\left|X_i - \frac{|S|}{h}\right| \geq mk \cdot \sqrt{\frac{\delta_h}{h}}$ for any $1 \leq i \leq h$ is at most $2h \cdot e^{-2mk\delta_h /h^2}$. Thus, to prove the lemma it only remains to show that the value of the two objective functions for $S$ are equal when $\left|X_i - \frac{|S|}{h}\right| < mk \cdot \sqrt{\frac{\delta_h}{h}}$ for every $1 \leq i \leq h$.

Notice that $\overline{y(S)}$ is a vector in which all the coordinates are equal to $|S|/ (mkh)$. Thus, the inequality $\left|X_i - \frac{|S|}{h}\right| < mk \cdot \sqrt{\frac{\delta_h}{h}}$ is equivalent to $[y_i(S) - \overline{y_i(S)}] < \sqrt{\frac{\delta_h}{h}}$. Hence,
\[
	|y(S) - \overline{y(S)}|_2^2
	=
	\sum_{i = 1}^h (y_i(S) - \overline{y_i(S)})^2
	<
	\sum_{i = 1}^h \left(\sqrt{\frac{\delta_h}{h}}\right)^2
	=
	\sum_{i = 1}^h \frac{\delta_h}{h}
	=
	\delta_h
	\enspace,
\]
which implies the lemma by the properties of $\hat{F}_h$ and $\hat{G}_h$.
\end{proof}\vspace{0pt}
 
Consider a polynomial time deterministic algorithm that gets either $\MM_{k, h, m}$ with its corresponding objective or $\MM'_{k, h, m}$ with its corresponding objective after a random permutation was applied to the ground set. Consider first the case that the algorithm gets $\MM'_{k, h, m}$ (and its corresponding objective). In this case, the algorithm checks the independence and value of a polynomial collection of sets (we may assume, without loss of generality, that the algorithm checks both things for every set that it checks). As before, one can observe that the sets in this collection do not depend on the permutation because the independence of a set in $\MM'_{k, h, m}$ and its value with respect to $\hat{G}_h(y(S)) = \hat{F}_h(\overline{y(S)})$ depend only on the set's size, which guarantees that the algorithm takes the same execution path given every permutation. If the same algorithm now gets $\MM_{k, h, m}$ instead, it will start checking the independence and values of the same sets until it will either get a different answer for one of the checks (different than what is expected for $\MM'_{k, h, m}$) or it will finish all the checks. Note that in the later case the algorithm must return the same answer that it would have returned had it been given $\MM'_{k, h, m}$.

By the union bound, Lemmata~\ref{le:independence_different_unlikely} and~\ref{le:value_different_unlikely} imply that the probability that any of the sets whose value or independence is checked by the algorithm will result in a different answer for the two inputs decreases exponentially in $m$, and thus, with high probability the algorithm fails to distinguish between the inputs, and returns the same output for both. Moreover, note that by Yao's principal this observation extends also to polynomial time randomized algorithms. 

We are now ready to prove Theorem~\ref{thm:submodular_hardness}.

\begin{proof}[Proof of Theorem~\ref{thm:submodular_hardness}]
Consider an algorithm that whose objective is to maximize $\hat{F}(y(S))$ over the $k$-extendible system $\cM_{k, h, m}$ after the random permutation was applied, and let $T$ be its output set. Notice that $\hat{F}(y(T)) \leq \hat{F}(\characteristic_{\cN_h}) \leq F(\characteristic_{\cN_h}) + \ee' = 1 + \ee'$. Moreover, the algorithm fails, with high probability, to distinguish between $\cM_{k, h, m}$ and $\cM'_{k, h, m}$. Thus, with high probability $T$ is independent in $\cM'(k, h, m)$ and has the same value under both objective functions $\hat{F}(y(S))$ and $\hat{G}(y(S))$, which implies, by Lemma~\ref{le:submodular_bad_opt}, $\hat{F}(y(S)) = \hat{G}(y(S)) \leq 1 - e^{-1/k} + h^{-1} + \ee'$. Hence, in conclusion we proved
\[
	\bE[\hat{F}(y(T))]
	\leq
	1 - e^{-1/k} + h^{-1} + \ee' + o(1)
	\enspace.
\]

On the other hand, Lemma~\ref{le:submodular_good_opt} shows that $\cM_{k, h, m}$ contains an independent set of value of at least $1 - 2k/h - \ee'$ (with respect to $\hat{F}(y(S))$). Thus, the approximation ratio of the algorithm is no better than
\begin{align*}
	&\frac{1 - 2k/h - \ee'}{1 - e^{-1/k} + h^{-1} + \ee' + o(1)}\\
	\geq{} &
	(1 - e^{-1/k} + h^{-1} + \ee' + o(1))^{-1} - (1 - e^{-1/k})^{-1}(2k/h + \ee')\\
	\geq{} &
	(1 - e^{-1/k})^{-1} - (1 - e^{-1/k})^{-2}(h^{-1} + \ee' + o(1)) - (1 - e^{-1/k})^{-1}(2k/h + \ee')\\
	\geq{} &
	(1 - e^{-1/k})^{-1} - (k + 1)^2(h^{-1} + \ee' + o(1)) - (k + 1)(2k/h + \ee')
	\enspace,
\end{align*}
where the last inequality holds since $1 - e^{-1/k} \geq (k + 1)^{-1}$.
Choosing a large enough $h$ (compared to $k$) and a small enough $\ee'$ (again, compared to $k$), we can make this approximation ratio larger than $(1 - e^{-1/k})^{-1} - \ee$ for any constant $\ee > 0$.
\end{proof} \vspace{0in}

\section{Improved Approximation Guarantee for Linear Functions} \label{app:linear_objectives_algorithm}

In Section~\ref{sec:randomized_alg} we described Algorithm~\ref{alg:sample_greedy}, which obtains a $(k + 1)$-approximation for the maximization of a monotone submodular function subject to a $k$-extendible system constraint. One can observe that the approximation ratio of this algorithm is no better than $k + 1$ even when the function is linear since it takes no element with probability larger than $(k + 1)^{-1}$. In this appendix we show that if the algorithm is allowed to select each element with probability $k^{-1}$, then its approximation ratio for linear functions improves to $k$; which proves the guarantee of Theorem~\ref{thm:randomized_alg} for linear objectives.

Specifically, we consider in this appendix a variant of Algorithm~\ref{alg:sample_greedy} in which the sampling probability in Line~\ref{line:sample} is changed to $k^{-1}$. Similarly, the probability in Line~\ref{line:sample_equiv} of Algorithm~\ref{alg:equiv_sample_greedy} is also changed to $k^{-1}$ in order to keep the two algorithms equivalent. In the rest of this appendix, any reference to Algorithms~\ref{alg:sample_greedy} and~\ref{alg:equiv_sample_greedy} should be implicitly understood as a reference to the variants of these algorithms with the above changes.

The analysis we present here for Algorithm~\ref{alg:equiv_sample_greedy} (and thus, also for its equivalent Algorithm~\ref{alg:sample_greedy}) is very similar to the one given in Section~\ref{sec:randomized_alg}, and it is based on the same ideas. However, slightly more care is necessary in order to establish the improved approximation ratio of $k$. We begin with the following lemma, which corresponds to Lemma~\ref{lem:lower_bound_S} from Section~\ref{sec:randomized_alg}. For every $u \in \cN$, let $Y_u$ be a random variable which takes the value $1$ if $u \in S$ and, in addition, $u$ does not belong to $O$ at the beginning of the iteration in which $u$ is considered. In every other case the value of $Y_u$ is $0$.

\begin{lemma} \label{lem:linear_lower_bound_S}
$f(S) \geq f(\OPT) - \sum \limits_{u \in \N} [|O_u| - Y_u] f(u)$.
\end{lemma}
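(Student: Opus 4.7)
The plan is to adapt the proof of Lemma~\ref{lem:lower_bound_S} to exploit the linearity and non-negativity of $f$, while carefully tracking elements of $\OPT$ that leave $O$ and are later reintroduced. The key preliminary observation is that $O \subseteq \OPT \cup S$ is maintained throughout Algorithm~\ref{alg:equiv_sample_greedy}, since $O$ starts as $\OPT$ and the only elements ever inserted into $O$ are those simultaneously added to $S$. Consequently $O \setminus S \subseteq \OPT$, and inspecting the two ways in which $O_u$ is defined by the algorithm shows that $O_u \subseteq \OPT$ for every $u \in \N$ as well.

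Starting from the identity $f(\OPT) = f(S) - f(S \setminus \OPT) + f(\OPT \setminus S)$ (by linearity of $f$), the main task becomes bounding $f(\OPT \setminus S)$. I would split each $v \in \OPT \setminus S$ according to its fate in the algorithm. If $v$ belongs to the final $O$, then property~\ref{itm:not_considered} implies $v$ was never considered; combined with $S + v \subseteq O \in \cI$ (from \ref{itm:O_ind} and \ref{itm:S_sub_O}) and the termination condition of the while loop, this forces $f(v) \leq 0$, hence $f(v) = 0$. Otherwise $v$ was removed at some unique iteration $u(v)$ with $v \in O_{u(v)}$ --- uniqueness because reinserting $v$ into $O$ would require sampling $v$, contradicting $v \notin S$. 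At iteration $u(v)$, a direct check (using \ref{itm:not_considered} to see $v \in \cN'$ at that moment and $S_{u(v)} + v \subseteq O \in \cI$ for feasibility) shows that $v$ would be a legal greedy candidate, so the greedy selection rule gives $f(v) \leq f(u(v))$. Summing and recalling $O_u \subseteq \OPT$ yields
\[
	f(\OPT \setminus S) \leq \sum_{u \in \N} |O_u \setminus S| \cdot f(u) = \sum_{u \in \N} (|O_u| - |O_u \cap S|) \cdot f(u).
\]

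To finish, I must relate the correction term $\sum_u |O_u \cap S| \cdot f(u)$ to $\sum_u Y_u f(u)$. The crucial observation is that an element $v \in O_u \cap S$ is precisely an $\OPT$-element that was removed from $O$ at iteration $u = u(v)$ and then re-added when sampled at its own later iteration; let $W$ denote this set. Then $\sum_u |O_u \cap S| \cdot f(u) = \sum_{v \in W} f(u(v)) \geq \sum_{v \in W} f(v)$, where the inequality re-uses the greedy comparison from the previous paragraph (applied at iteration $u(v)$). Meanwhile, $Y_u = 1$ holds exactly when $u$ is sampled and either $u \in S \setminus \OPT$ (in which case $u \notin O$ at the start trivially) or $u \in W$, so $\sum_u Y_u f(u) = f(S \setminus \OPT) + \sum_{v \in W} f(v)$. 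Chaining these facts with the starting identity produces $f(\OPT) \leq f(S) + \sum_u (|O_u| - Y_u) f(u)$, as required.

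The main obstacle is precisely the bookkeeping for the set $W$: each $v \in W$ contributes twice---once as $f(u(v))$ inside $\sum_u |O_u \cap S| \cdot f(u)$ and once as $f(v)$ inside $\sum_u Y_u f(u)$---and the right cancellation direction hinges on invoking $f(v) \leq f(u(v))$. The particular combination $|O_u| - Y_u$ in the lemma's statement is exactly what makes the doubly-processed elements of $W$ cancel out cleanly against the elements of $S \setminus \OPT$, so getting the matching correct is the one delicate step of the argument.
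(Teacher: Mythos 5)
Your argument is correct, but it is organized quite differently from the paper's. The paper reuses the first half of the proof of Lemma~\ref{lem:lower_bound_S} to get $f(S) \geq f(O)$ and then treats $f(O)$ as a potential function: in the iteration where $u$ is considered, $f(O)$ drops by $\sum_{v \in O_u} f(v) - Y_u f(u) \leq [\,|O_u| - Y_u\,] f(u)$ (the $Y_u f(u)$ term being the value credited back when $u$ is inserted into $O$), and telescoping from $O = \OPT$ gives the claim in a few lines. You instead bypass $f(O)$ entirely, start from the partition $f(\OPT) = f(S) - f(S \setminus \OPT) + f(\OPT \setminus S)$, and run a global charging argument: every $v \in \OPT \setminus S$ either survives in the final $O$ (whence $f(v) = 0$ by the termination condition) or is charged to the iteration $u(v)$ that evicted it, and the removed-then-resampled set $W$ is identified explicitly to show that $\sum_u Y_u f(u) = f(S \setminus \OPT) + \sum_{v \in W} f(v)$ is dominated by $f(S \setminus \OPT) + \sum_u |O_u \cap S|\, f(u)$. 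Both proofs rest on the same two pillars --- the greedy comparison $f(v) \leq f(u(v))$ for $v \in O_{u(v)}$ and the fact that $Y_u$ credits exactly the elements added to $S$ from outside $O$ --- so the paper's version is shorter and less error-prone, while yours makes completely explicit \emph{why} the combination $|O_u| - Y_u$ is the right one (the doubly-processed elements of $W$ cancel). One cosmetic point, shared with the paper's own write-up: an evicted $v$ with $\Delta f(v \mid S_{u(v)}) \leq 0$ is not literally a legal candidate for the greedy step, but $f(v) \leq 0 < f(u(v))$ holds anyway, so the comparison survives.
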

\begin{proof}
The proof of Lemma~\ref{lem:lower_bound_S} begins by showing that $f(S) \geq f(O)$. This part of the proof is still true since it is independent of the sampling probability. Thus, we only need to show that
\[
	f(O)
	\geq
	f(\OPT) - \sum \limits_{u \in \N} [|O_u| - Y_u] f(u)
	\enspace.
\]
Recall that $O$ beings as equal to $\OPT$. Thus, to prove the last inequality it is enough to show that the second term on its right hand side is an upper bound on the decrease in the value of $O$ over time. In the rest of the proof we do this by showing that $[|O_u| - Y_u] f(u)$ is an upper bound on the decrease in the value of $O$ in the iteration in which $u$ is considered, and is equal to $0$ when $u$ is not considered at all. 

Let us first consider the case that $u$ is not considered at all. In this case, by definition, $O_u = \varnothing$ and $Y_u = 0$, which imply together $[|O_u| - Y_u] f(u) = 0 \cdot f(u) = 0$. Consider now the case that $u$ is considered by Algorithm~\ref{alg:equiv_sample_greedy}. In this case $O$ is changed during the iteration in which $u$ is considered in two ways. First, the elements of $O_u$ are removed from $O$, and second, $u$ is added to $O$ if it is added to $S$ and it does not already belong to $O$. Thus, the decrease in the value of $O$ during this iteration can be written as
\[
	\sum_{v \in O_u} f(v) - Y_u \cdot f(u)
	\enspace.
\]
To see why this expression is lower bounded by $[|O_u| - Y_u] f(u)$, we recall that in the proof of Lemma~\ref{lem:lower_bound_S} we showed that $f(v|S_u) \leq f(u|S_u)$ for every $v \in O_u$, which implies, since $f$ is linear, $f(v) \leq f(u)$ for every such element $v$.
\end{proof}\vspace{0pt}

The next lemma corresponds to Lemma~\ref{lem:exp_S} from Section~\ref{sec:randomized_alg}. We omit its proof since it is completely identical to the proof of Lemma~\ref{lem:exp_S} up to change in the sampling probability.

\begin{lemma} \label{lem:linear_exp_S}
$\E{f(S)} \geq \frac{1}{k} \sum \limits_{u \in \N} \E{X_u \Delta f(u|S_u)} = \frac{1}{k} \sum \limits_{u \in \N} \E{X_u} f(u)$.
\end{lemma}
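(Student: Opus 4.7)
The plan is to mirror the proof of Lemma~\ref{lem:exp_S} almost verbatim, changing only the sampling probability from $(k+1)^{-1}$ to $k^{-1}$, and then exploiting linearity of $f$ to derive the second equality of the lemma.

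First, for each $u \in \N$ I would define the random variable $G_u$ to be the increase in $f(S)$ that occurs when $u$ is added to $S$ by Algorithm~\ref{alg:equiv_sample_greedy}, with the convention $G_u = 0$ if $u$ is never added. Since $S$ is built by a sequence of element additions starting from $\varnothing$ and $f(\varnothing) \geq 0$, we obtain $f(S) \geq \sum_{u \in \N} G_u$. By linearity of expectation, it therefore suffices to prove the per-element identity
\[
	\E{G_u} = \frac{1}{k}\,\E{X_u \Delta f(u \mid S_u)},
\]
and then to pass to the second form.

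To establish this identity, I would apply the law of total probability with respect to events $\mathcal{E}_u$, where each $\mathcal{E}_u$ specifies all random decisions made by the algorithm up until (but not including) the iteration in which it considers $u$---or, if $u$ is never considered, all decisions throughout the execution. Conditioned on any $\mathcal{E}_u$ that implies $u$ is not considered, both $G_u$ and $X_u$ vanish identically, so the identity holds trivially. Conditioned on an $\mathcal{E}_u$ that implies $u$ \emph{is} considered, $S_u$ is determined by $\mathcal{E}_u$ (denote its deterministic value by $S_u'$) and $X_u = 1$; the only remaining source of randomness is the independent Bernoulli coin in the modified Line~\ref{line:sample_equiv}, which adds $u$ to $S$ with probability $k^{-1}$. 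Hence $\E{G_u \mid \mathcal{E}_u} = k^{-1}\,\Delta f(u \mid S_u') = k^{-1}\,\E{X_u \Delta f(u \mid S_u) \mid \mathcal{E}_u}$, and aggregating over the partition into $\mathcal{E}_u$'s yields the unconditional identity.

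For the second equality, I would observe that whenever $X_u = 1$ we have $u \notin S_u$, so linearity of $f$ (with the standard normalization $f(\varnothing) = 0$) gives $\Delta f(u \mid S_u) = f(\{u\}) = f(u)$ as a deterministic constant. Pulling this constant out of the expectation rewrites $\E{X_u \Delta f(u \mid S_u)}$ as $\E{X_u} f(u)$ and completes the proof. As in the proof of Lemma~\ref{lem:exp_S}, the only delicate point is the conditioning argument used to decouple $X_u$ from the subsequent coin flip that decides whether $u$ enters $S$; the change in sampling probability introduces no additional obstacle, and linearity of $f$ makes the final step entirely routine.
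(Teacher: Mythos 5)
Your proof is correct and follows essentially the same route as the paper, which simply reuses the proof of Lemma~\ref{lem:exp_S} verbatim with the sampling probability changed from $(k+1)^{-1}$ to $k^{-1}$: the same decomposition $f(S) \geq \sum_{u} G_u$, the same conditioning on the events $\mathcal{E}_u$, and the same per-element identity. Your explicit justification of the second equality via $\Delta f(u \mid S_u) = f(u)$ for linear $f$ is a small but welcome addition that the paper leaves implicit.
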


We need one last lemma which corresponds to Lemma~\ref{lem:link} from Section~\ref{sec:randomized_alg}. The role of this lemma is to relate terms appearing in the last two lemmata.

\begin{lemma} \label{lem:linear_link}
For every element $u \in \N$,
\begin{equation} \label{eq:linear_linking_inequality}
\E{|O_u| - Y_u} \leq \frac{k - 1}{k} \E{X_u}\enspace.
\end{equation}
\end{lemma}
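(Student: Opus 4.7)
The plan is to mirror the conditional-expectation structure used in the proof of Lemma~\ref{lem:link}. Let $\cE_u$ be an arbitrary event specifying all random decisions made by Algorithm~\ref{alg:equiv_sample_greedy} up to and including the iteration in which $u$ is considered (or throughout the whole execution if $u$ is never considered). By the law of total expectation applied over the disjoint family of such events, it suffices to prove Inequality~\eqref{eq:linear_linking_inequality} conditionally on every $\cE_u$. When $\cE_u$ implies that $u$ is never considered, we have $X_u = 0$, $O_u = \varnothing$ and $Y_u = 0$, so both sides of the inequality vanish. Thus I may assume $\cE_u$ fixes the execution up to (but not including) the sampling decision at $u$'s iteration, so that $X_u = 1$ deterministically and $S_u$ is a fixed set; the inequality then reduces to $\E{|O_u| - Y_u \mid \cE_u} \leq (k-1)/k$.

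I would then split into two subcases according to whether $u \in O$ at the beginning of the iteration in which $u$ is considered. In the first subcase ($u \in O$), sampling $u$ (probability $1/k$) yields $O + u = O \in \cI$ and therefore $O_u = \varnothing$; not sampling $u$ forces $O_u = \{u\}$ in order to restore property \ref{itm:not_considered}. Because $Y_u$ demands $u \notin O$ at the start of the iteration, we have $Y_u = 0$ in both sub-subcases, so the conditional expectation of $|O_u| - Y_u$ equals exactly $(1 - 1/k)\cdot 1 = (k-1)/k$. In the second subcase ($u \notin O$), sampling $u$ yields $|O_u| \leq k$ by the $k$-extendibility of $(\cN, \cI)$ (as in the discussion after Algorithm~\ref{alg:equiv_sample_greedy}) and simultaneously sets $Y_u = 1$; not sampling $u$ gives $O_u = \varnothing$ and $Y_u = 0$. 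Hence the conditional expectation of $|O_u| - Y_u$ is at most $(1/k)(k - 1) + (1 - 1/k) \cdot 0 = (k-1)/k$, again as required.

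The delicate point — and the main obstacle worth flagging — is the role of $Y_u$. In the original Lemma~\ref{lem:link}, the second subcase only produced a bound of $k/(k+1)$, which exactly absorbs the $k$-extendibility loss against the sampling probability. Here the sampling probability has been pushed up to $1/k$, and so without the correction term the analogous computation would give $(1/k)\cdot k = 1$, not $(k-1)/k$. The subtraction of $Y_u$ precisely refunds the contribution of $u$ itself in the only subcase where it is added to $S$ from outside of $O$, which is exactly the amount by which Lemma~\ref{lem:linear_lower_bound_S} strengthens Lemma~\ref{lem:lower_bound_S} for linear objectives. Verifying that these two modifications (the extra $-Y_u$ in the lower bound, and the extra $-Y_u$ in the link inequality) are balanced across the two cases is the only nontrivial step; once confirmed case by case, the conclusion follows immediately.
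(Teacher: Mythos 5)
Your proof is correct and follows essentially the same route as the paper's: condition on the execution history up to (but not including) the sampling decision at $u$'s iteration, then split on whether $u \in O$ at that moment, computing $\E{|O_u| - Y_u \mid \cE_u} \leq (k-1)/k$ in each case exactly as the paper does. The only quibble is the phrase ``up to and including the iteration in which $u$ is considered'' in your definition of $\cE_u$, which you immediately and correctly walk back; the argument as executed is the paper's.
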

\begin{proof}
As in the proofs of Lemma~\ref{lem:link}, let $\mathcal{E}_u$ be an arbitrary event specifying all random decisions made by Algorithm~\ref{alg:equiv_sample_greedy} up until the iteration in which it considers $u$ if $u$ is considered, or all random decisions made by Algorithm~\ref{alg:equiv_sample_greedy} throughout its execution if it never considers $u$. By the law of total probability, since these events are disjoint, it is enough to prove Inequality~(\ref{eq:linear_linking_inequality}) conditioned on every such event $\mathcal{E}_u$. If $\mathcal{E}_u$ implies that $u$ is not considered, then $|O_u|$, $X_u$ and $Y_u$ are all $0$ conditioned on $\mathcal{E}_u$, and thus, the inequality holds as an equality. Thus, we may assume in the rest of the proof that $\mathcal{E}_u$ implies that $u$ is considered by Algorithm~\ref{alg:equiv_sample_greedy}. Notice that, conditioned on $\cE_u$, $X_u$ takes the value $1$. Hence, Inequality~(\ref{eq:linear_linking_inequality}) reduces to 
\[\E{|O_u| - Y_u\mid \mathcal{E}_u} \leq \frac{k - 1}{k} \enspace. \]
There are now two cases to consider. The first case is that $\mathcal{E}_u$ implies that $u \in O$ at the beginning of the iteration in which Algorithm~\ref{alg:equiv_sample_greedy} considers $u$. In this case $Y_u = 0$, and in addition, $O_u$ is empty if $u$ is added to $S$, and is $\{u\}$ if $u$ is not added to $S$. As $u$ is added to $S$ with probability $\frac{1}{k}$, this gives
\[\E{|O_u| - Y_u \mid \mathcal{E}_u} \leq \frac{1}{k} \cdot | \varnothing | + \left( 1- \frac{1}{k}\right) |\{u\}| = \frac{k-1}{k} \enspace, \]
and we are done. Consider now the case that $\mathcal{E}_u$ implies that $u \not \in O$ at the beginning of the iteration in which Algorithm~\ref{alg:equiv_sample_greedy} considers $u$. In this case, if $u$ is not added to $S$, then we get $Y_u = 0$ and $O_u = \varnothing$. In contrast, if $u$ is added to $S$, then $Y_u = 1$ by definition and $|O_u| \leq k$ by the discussion following Algorithm~\ref{alg:equiv_sample_greedy}. As $u$ is, again, added to $S$ with probability $\frac{1}{k}$, we get in this case
\[ \E{|O_u| - Y_u | \mid \mathcal{E}_u} \leq \frac{1}{k} \cdot (k - 1) + \left( 1- \frac{1}{k}\right) |\varnothing| = \frac{k - 1}{k} \enspace. \inArXiv{\qedhere} \]
\end{proof}\vspace{0pt}

We are now ready to prove the guarantee of Theorem~\ref{thm:randomized_alg} for linear objectives.
\begin{proof}[Proof of Theorem~\ref{thm:randomized_alg} for linear objectives.]
We prove here that the approximation ratio guaranteed by Theorem~\ref{thm:randomized_alg} for linear objectives is obtained by (the modified) Algorithm~\ref{alg:sample_greedy}. As discussed earlier, Algorithms~\ref{alg:sample_greedy} and~\ref{alg:equiv_sample_greedy} have identical output distributions, and so it suffices to show that Algorithm~\ref{alg:equiv_sample_greedy} achieves this approximation ratio. Note that
\begin{align*}
\E{f(S)} &\geq f(\OPT) - \sum \limits_{u \in \N} \bE[|O_u| - Y_u] f(u) &\text{(Lemma~\ref{lem:linear_lower_bound_S})} \\
&\geq f(\OPT) - \frac{k-1}{k}\sum \limits_{u \in \N} \bE[X_u] f(u) &\text{(Lemma~\ref{lem:linear_link})} \\
&\geq f(\OPT) - (k - 1) \E{f(S)}\enspace. &\text{(Lemma~\ref{lem:linear_exp_S})}
\end{align*}
The proof now completes by rearranging the above inequality.
\end{proof}\vspace{0pt}

%
%

%

\inConference{

\section{Missing Proofs of Section~\ref{sec:deterministic_alg}} \label{app:missing_proofs_deterministic}

\begin{replemma}{lem:known_approx_results}
\knownApproxResultsLemma
\end{replemma}
\knownApproxResultsProof

\begin{replemma}{lem:sum_greedy_results}
\sumGreedyResultsLemma
\end{replemma}
\sumGreedyResultsProof

\begin{replemma}{lem:submodular_fact1}
\submodularFactLemma
\end{replemma}
\submodularFactProof

In the proof of Theorem~\ref{thm:deterministic_alg} we omitted the calculation showing that for $\ell = \lceil \sqrt{k} \rceil$, we get
\calculations

\section{Missing Proofs of Section~\ref{sec:randomized_alg}} \label{app:missing_proofs_randomized}

\begin{replemma}{lem:lower_bound_S}
\lowerBoundSLemma
\end{replemma}
\lowerBoundSProof

\begin{replemma}{lem:exp_S}
\expSLemma
\end{replemma}
\expSProof

\begin{replemma}{lem:link}
\linkLemma{}
\end{replemma}
\linkProof

In the proof of Theorem~\ref{thm:randomized_alg} we omitted the explanation why the inequality $\E{f(S \cup \OPT)} \geq \frac{k}{k +1} f(\OPT)$ holds. To see why this is the case, \knownResultsProf

}

\end{document}